\documentclass{article}

\usepackage{microtype}
\usepackage{graphicx}
\usepackage{subcaption}
\usepackage{booktabs} % for professional tables

\usepackage{hyperref}
\usepackage{paralist}
\usepackage{multirow}
\usepackage{xcolor}
\usepackage{makecell}
\usepackage{array}
\newcolumntype{C}[1]{>{\centering\arraybackslash}p{#1}}
\usepackage{pifont}

\usepackage[preprint]{icml2026}

\usepackage{amsmath}
\usepackage{amssymb}
\usepackage{mathtools}
\usepackage{amsthm}

\usepackage{xcolor,ulem,booktabs}
\newcommand{\best}[1]{\textcolor{red}{\textbf{#1}}}
\newcommand{\second}[1]{\textcolor{blue}{\uline{#1}}}

\usepackage[capitalize,noabbrev]{cleveref}

\theoremstyle{plain}
\newtheorem{theorem}{Theorem}[section]

\theoremstyle{definition}
\newtheorem{definition}[theorem]{Definition}
\newtheorem{assumption}[theorem]{Assumption}
\theoremstyle{remark}

\usepackage[textsize=tiny]{todonotes}

\icmltitlerunning{Dynamic TMoE: A Drift-Aware Dynamic Mixture of Experts Framework for Non-Stationary Time Series Forecasting}

\usepackage{xurl}

\begin{document}

\twocolumn[
  \icmltitle{Dynamic TMoE: A Drift-Aware Dynamic Mixture of Experts Framework for Non-Stationary Time Series Forecasting}

  % It is OKAY to include author information, even for blind submissions: the
  % style file will automatically remove it for you unless you've provided
  % the [accepted] option to the icml2026 package.

  % List of affiliations: The first argument should be a (short) identifier you
  % will use later to specify author affiliations Academic affiliations
  % should list Department, University, City, Region, Country Industry
  % affiliations should list Company, City, Region, Country

  % You can specify symbols, otherwise they are numbered in order. Ideally, you
  % should not use this facility. Affiliations will be numbered in order of
  % appearance and this is the preferred way.
  \icmlsetsymbol{equal}{*}

  \begin{icmlauthorlist}
    \icmlauthor{Jiawen Zhu}{sst}
    \icmlauthor{Shuhan Liu}{cad}
    \icmlauthor{Di Weng}{sst}
    \icmlauthor{Yingcai Wu}{cad}
  \end{icmlauthorlist}

  \icmlaffiliation{sst}{School of Software Technology, Zhejiang University, Ningbo, China}
  \icmlaffiliation{cad}{State Key Lab of CAD\&CG, Zhejiang University, Hangzhou, China}

  \icmlcorrespondingauthor{Di Weng}{dweng@zju.edu.cn}

  % You may provide any keywords that you find helpful for describing your
  % paper; these are used to populate the "keywords" metadata in the PDF but
  % will not be shown in the document
  \icmlkeywords{Machine Learning, ICML}

  \vskip 0.3in
]

% this must go after the closing bracket ] following \twocolumn[ ...

% This command actually creates the footnote in the first column listing the
% affiliations and the copyright notice. The command takes one argument, which
% is text to display at the start of the footnote. The \icmlEqualContribution
% command is standard text for equal contribution. Remove it (just {}) if you
% do not need this facility.

% Use ONE of the following lines. DO NOT remove the command.
% If you have no special notice, KEEP empty braces:
\printAffiliationsAndNotice{}  % no special notice (required even if empty)
% Or, if applicable, use the standard equal contribution text:
% \printAffiliationsAndNotice{\icmlEqualContribution}

\begin{abstract}
Non-stationary time series forecasting is challenged by evolving distribution shifts that static models struggle to capture.
While Mixture-of-Experts (MoE) architectures offer a promising paradigm for decoupling complex drift patterns, existing approaches are limited by fixed expert pools and memoryless routing, hampering their ability to adapt to abrupt regime shifts.
To address this, we propose \textbf{Dynamic TMoE}, a framework that unifies architectural evolution with temporal continuity during learning phase.
By detecting distribution shifts via Maximum Mean Discrepancy (MMD), we dynamically instantiate heterogeneous experts and prune redundant ones to optimize capacity.
Additionally, a temporal memory router leverages recurrent states and an anomaly repository to ensure stable, context-aware expert selection without requiring test-time updates.
Experiments on nine benchmarks demonstrate state-of-the-art performance, reducing MSE by 10.4\% and MAE by 7.8\%.
Code is available at \url{https://github.com/andone-07/Dynamic-TMoE}.
\end{abstract}

\section{Introduction}
Time series forecasting is a cornerstone of critical decision-making systems, ranging from energy management~\cite{salman2024energy} and healthcare monitoring~\cite{mishra2024health} to financial analysis~\cite{olorunnimbe2024finance}.
Despite its importance, robust forecasting remains a formidable challenge because real-world signals are inherently non-stationary. 
As shown in Figure~\ref{fig:MoE_limitation}(a), these data are characterized by continuous distribution shifts and evolving temporal dependencies~\cite{Gama2014survey}.

In practice, this non-stationarity often manifests as regime shifts, where a single time series switches between distinct statistical distributions. 
For instance, a system may transition from a stable, periodic seasonal pattern to a highly volatile trend driven by external shocks. 
Accurately modeling these dynamic transitions remains a fundamental challenge in the development of robust forecasting architectures.

Recent studies mitigate the impact of non-stationarity via distribution adaptation beyond general sequence modeling.
The main paradigm involves stationarization.
For instance, Non-stationary Transformers~\cite{Liu2022Non-stationary-Transformers} restore intrinsic non-stationary information into temporal dependencies, while Dish-TS~\cite{Fan2023Dish-TS} utilizes a Dual-CONET method to mitigate shifts between historical and future distributions.
Recently, TimeStacker~\cite{liu2025timestacker} adapts to evolving signal patterns via multi-level observations and frequency-based attention.
Despite their efficacy, these monolithic designs exhibit limited flexibility in handling abrupt regime transitions or complex distribution shifts.

\begin{figure}[h]
  \centering
  \centerline{\includegraphics[width=\columnwidth]{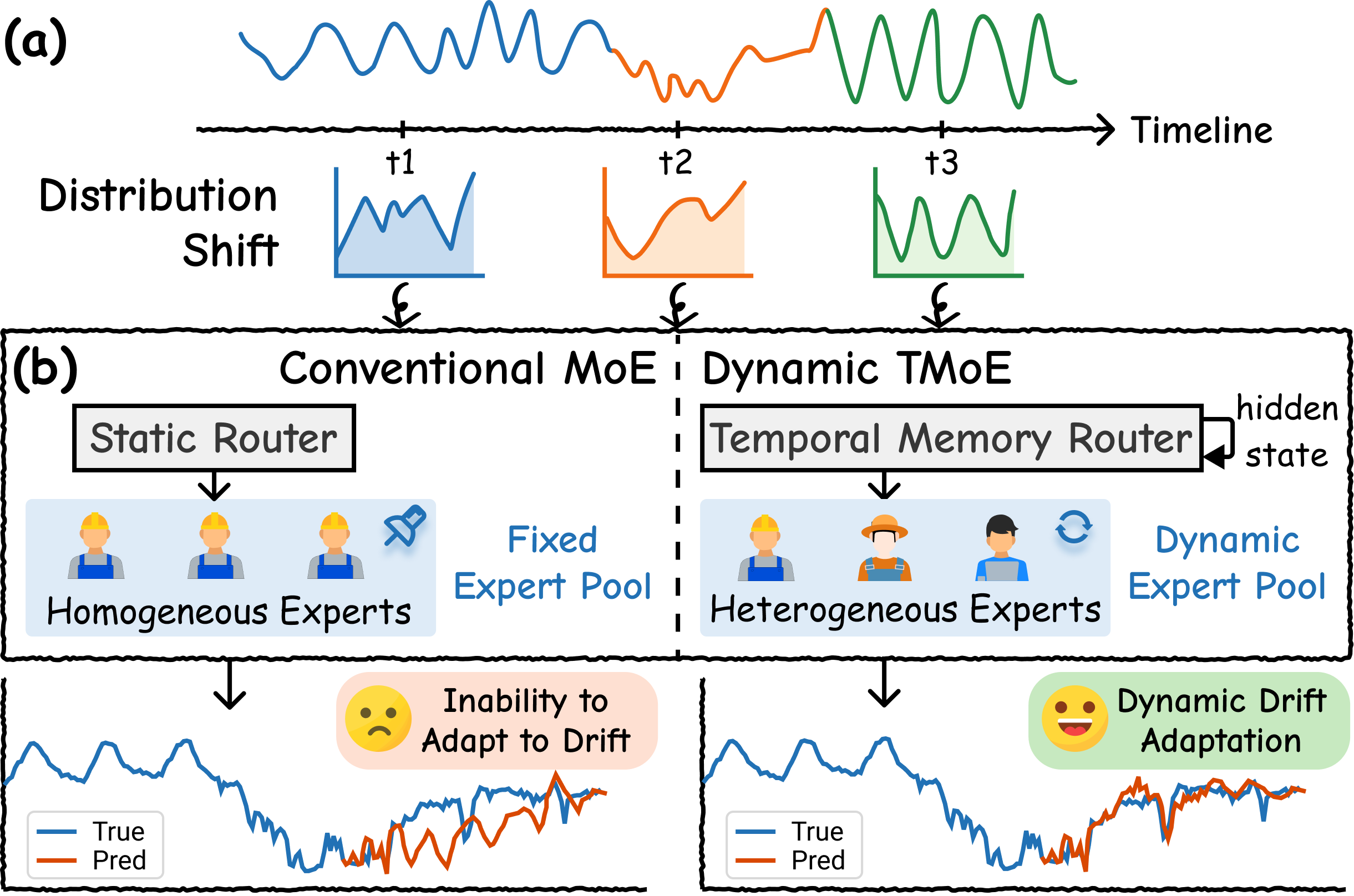}}
  \caption{
    \textbf{(a)} Time series data with distribution shift.
    \textbf{(b)} Comparison between Conventional MoE and Dynamic TMoE.
    \textbf{Conventional MoE:} Limited by static routing and fixed homogeneous experts, struggling to adapt to distribution shifts.
    \textbf{Dynamic TMoE:} Employs temporal memory routing and a dynamic heterogeneous pool for history-aware, specialized adaptation.
  }
  \label{fig:MoE_limitation}
\end{figure}

The Mixture-of-Experts (MoE) paradigm provides a promising alternative to monolithic designs by enabling specialized experts to capture the distinct distributions inherent in non-stationary data.
For example, TFPS~\cite{Sun2025TFPS} leverages subspace clustering to categorize data into distinct patterns, routing them to specialized experts to mitigate pattern drifts.
Despite these advancements, existing MoE-based models still encounter two challenges (Figure~\ref{fig:MoE_limitation}(b)):

\textbf{1. Temporal Rigidity.}
    Conventional MoEs employ fixed expert pools and memoryless routing, constraining their adaptability to evolving distributions.
    Static expert pools cannot accommodate novel patterns emerging from severe distribution shifts, while memoryless gating ignores temporal continuity, causing erratic and suboptimal adaptation.
    
\textbf{2. Insufficient Specialization.}
    Conventional MoEs rely on homogeneous experts, lacking the functional diversity required to decouple distinct drift components, forcing the model to redundantly relearn the entire shifted distribution rather than efficiently adapting to specific evolving patterns.

To address these challenges, we propose Dynamic TMoE, a novel framework designed to accommodate non-stationarity via two key mechanisms:

(1) A \textbf{Temporal Memory Router} coupled with a \textbf{Maximum Mean Discrepancy (MMD)-based Evolutionary Strategy} addresses temporal rigidity.
The router leverages a Gated Recurrent Unit (GRU) to maintain historical routing context, ensuring temporal continuity in expert selection.
The evolutionary strategy monitors distribution shifts via MMD and dynamically instantiates new experts when significant drift is detected.
(2) \textbf{Heterogeneous experts} address insufficient specialization by employing functionally diverse architectures to capture distinct temporal components (e.g., trend and seasonality), enabling effective disentanglement of complex patterns.

Our main contributions are summarized as follows:
\begin{compactitem} 
\item We introduce a framework that integrates heterogeneous experts with temporal memory-based routing, enabling specialized modeling of distinct temporal patterns while maintaining context-aware, temporally consistent expert selection. 
\item We propose an MMD-based drift detection mechanism that dynamically expands and prunes the expert pool, allowing model capacity to scale adaptively with distribution shifts. 
\item We evaluate Dynamic TMoE on nine real-world benchmarks, achieving \textbf{state-of-the-art} performance with average reduction of $10.4\%$ in MSE and $7.8\%$ in MAE over competitive baselines.
Ablation and case studies further verify our framework's adaptability.
\end{compactitem}
\section{Related Work}
\subsection{Non-stationary Time Series Forecasting}
There are two main categories of approaches to address non-stationary time series forecasting challenges.

\textbf{Input-Level Normalization.}
Many existing approaches adopt a remove-predict-restore paradigm, aiming to map non-stationary inputs into a stable distribution.
RevIN~\cite{kim2022revin} pioneered global instance normalization, which SAN~\cite{Liu2023SAN} later refined by shifting the granularity to local, slice-level adaptation.
Moving beyond general-purpose strategies, SIN~\cite{Han2024SIN} focused on the selectivity and interpretability of statistics.
Later, FAN~\cite{Ye2024FAN} extended to the frequency domain to address evolving seasonal patterns. 
Most recently, IN-Flow~\cite{Wei2025Inflow} transitioned the field from manual statistic calculation to learnable distribution mapping via invertible flow networks.
While effective, these methods often decouple stationarization from the core prediction task, potentially discarding non-stationary signals critical for forecasting.

\textbf{Model-Internal Adaptation.} 
To address the limitations of external normalization, recent studies integrate non-stationary awareness directly into model architectures. 
One direction focuses on interval-based alignment.
For instance, AdaRNN~\cite{Du2021AdaRNN} minimizes distribution mismatches between segments, while Non-stationary Transformers~\cite{Liu2022Non-stationary-Transformers} and TimeBridge~\cite{liu2025timebridge} redesign attention mechanisms to restore removed statistics or capture cointegration trends. 
Another direction leverages physical or spectral dynamics.
For instance, Koopa~\cite{Liu2023Koopa} utilizes Koopman operators for disentangling time-variant dynamics and DERITS~\cite{Fan2024DERITS} employs frequency derivative transformations to capture evolving spectral patterns.
Furthermore, hierarchical frameworks like TimeStacker~\cite{liu2025timestacker} capture non-stationarity across multiple observational scales.

Despite these advancements, most existing methods rely on a monolithic backbone to handle diverse distribution shifts. 
This lack of architectural modularity limits the model's ability to specialize in distinct, co-occurring temporal patterns.
Thus, we propose Dynamic TMoE to address this gap.

\subsection{Mixture of Experts for Time Series Forecasting}
To address the limitation of monolithic architectures, the Mixture of Experts (MoE) paradigm decomposes complex non-stationary patterns into subproblems handled by specialized experts.
Recent advancements have successfully adapted MoE to time series foundation models.
Time-MoE~\cite{shi2025timemoe} leverages sparse gating to achieve efficient billion-scale pre-training with reduced inference costs, while Moirai-MoE~\cite{liu2025moiraimoe} exploits MoE to enable automatic token-level specialization, explicitly bypassing the limitations of rigid frequency-based heuristics.

Building on these successes, subsequent efforts have focused on refining how experts are assigned to specific data characteristics.
For instance, DUET~\cite{Qiu2025DUET} and TFPS~\cite{Sun2025TFPS} use temporal or subspace clustering to route specific data distributions to specialized experts.
Similarly, WaveTS~\cite{Zhou2025WaveTS} incorporates wavelet transforms and channel clustering to manage multi-channel dependencies within the MoE framework.

However, existing MoE architectures typically operate with a static pool of homogeneous experts, which restricts their adaptability to novel or structurally diverse patterns emerging in non-stationary streams. 
Besides, these models predominantly rely on stateless routing mechanisms that ignore the historical trajectory of the underlying process, causing inconsistent expert selection. 
In contrast, Dynamic TMoE introduces a heterogeneous expert pool that expands dynamically via drift detection and utilizes a memory-augmented router to preserve temporal continuity across drifts.
\section{Method}
\subsection{Problem Definition}
\textbf{Multivariate Time Series Forecasting.} 
Let $\mathcal{X} = \{\mathbf{x}_t\}_{t=1}^\infty$ denote a multivariate time series, where $\mathbf{x}_t \in \mathbb{R}^V$ represents the values of $V$ variables at time step $t$.
Given a look-back window of length $L$, denoted as input matrix $\mathbf{X}_t = \mathbf{x}_{t-L+1:t} \in \mathbb{R}^{L \times V}$, the goal of multivariate time series forecasting is to predict the future sequence of length $T$, denoted as $\mathbf{Y}_t = \mathbf{x}_{t+1:t+T} \in \mathbb{R}^{T \times V}$.
The objective is to learn a mapping function $\mathcal{F}_{\theta}$ parameterized by learnable weights $\theta$, to generate the prediction $\hat{\mathbf{Y}}_t = \mathcal{F}_{\theta}(\mathbf{X}_t)$.

\textbf{Non-stationary and Distribution Shift.}
Let $P(\mathbf{Y_t}|\mathbf{X_t})$ denote the conditional probability distribution of the data at time $t$.
Standard supervised learning assumes a static distribution, i.e., $P(\mathbf{Y_t}|\mathbf{X_t}) = P(\mathbf{Y_{t+\tau}}|\mathbf{X_{t+\tau}})$ for any time lag $\tau$.
However, real-world time series are inherently non-stationary.
In our setting, we assume the distribution shifts over time, such that $P(\mathbf{Y_{t}}|\mathbf{X_{t}}) \neq P(\mathbf{Y_{t+\tau}}|\mathbf{X_{t+\tau}})$.

\subsection{Overall Architecture}
To address the challenges of non-stationarity, we propose Dynamic TMoE.
As illustrated in Figure~\ref{fig:framework}, the framework processes patch embeddings through five synergistic components that form a cohesive system of Perception-Decision-Adaptation:
(a) \textbf{Patch Embedding} partitions the input into overlapping patches and transforms them into high-dimensional patch embeddings.
(b) \textbf{Distribution Shift Detector} \textit{(Perception)} continuously monitors distributional divergence between historical and current windows, triggering expert pool evolution upon detecting shifts.
(c) \textbf{Temporal Memory Router} \textit{(Decision)} dynamically routes patches using recurrent memory to synthesize current features with historical context. 
Upon drift detection, it leverages an anomaly state repository to retrieve prior knowledge, accelerating adaptation to recurring shifts.
(d) \textbf{Evolvable Expert Manager} \textit{(Adaptation)} dynamically evolves the expert pool by instantiating new experts for emerging drifts and pruning underutilized ones to ensure structural plasticity.
(e) \textbf{Heterogeneous Expert Pool} \textit{(Adaptation)} comprises a base expert pool for fundamental patterns and an evolving drift expert pool that expands upon drift detection.
Each expert is heterogeneously architected to provide diverse inductive biases, enabling the model to capture multifaceted temporal dynamics and complex multivariate dependencies.

\begin{figure*}[t]
  \centering
  \includegraphics[width=\textwidth]{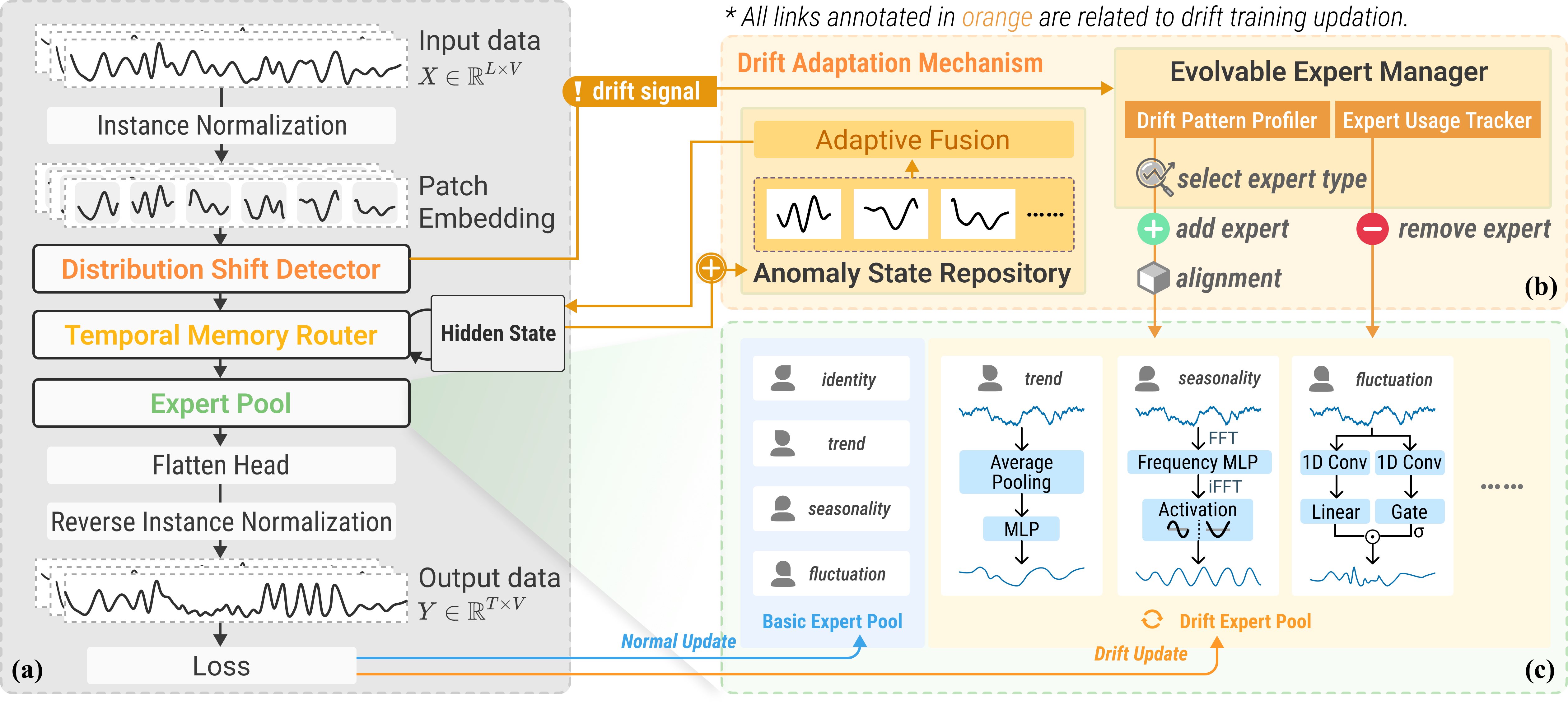}
  \caption{
    Overview of the Dynamic TMoE framework.
    % (a) Overall Architecture.
    % The model processed input patches through a closed-loop mechanism of Perception-Decision-Adaptation.
    % The Distribution Shift Detector monitors shifts to trigger the Evolvable Expert Manager, which dynamically adds or prunes experts in the Drift Expert Pool.
    % Simultaneously, the Temporal Memory Router leverages recurrent states to ensure context-aware expert routing.
    \textbf{(a) Overall Architecture.} Dynamic TMoE employs a closed-loop Perception-Decision-Adaptation mechanism. A Distribution Shift Detector triggers the Evolvable Expert Manager to dynamically add or prune experts in the Drift Expert Pool during training. Simultaneously, the Temporal Memory Router utilizes recurrent hidden states for context-aware routing, leveraging historical states via an Anomaly State Repository and Adaptive Fusion.
    \textbf{(b) Evolvable Expert Manager.} This module utilizes a Drift Pattern Profiler and Expert Usage Tracker to manage capacity. When a shift is detected, it performs expert type selection and pre-training.
    \textbf{(c) Heterogeneous Expert Designs.}
    Trend Expert extracts global trends via average pooling.
    Seasonality Expert captures periodic signals in the frequency domain.
    Fluctuation Expert models local volatility using causal convolutions.
  }
  \label{fig:framework}
\end{figure*}

\subsection{Patch Embedding}
We partition the multivariate input series $\mathbf{X} \in \mathbb{R}^{L \times V}$ into overlapping patches of length $P$ and stride $S$.
These patches are projected into a latent dimension $D$, generating a sequence of patch embeddings $\mathbf{X}_{p} \in \mathbb{R}^{N \times D}$, where $N$ is the number of patches.
This representation encapsulates local temporal patterns and serves as the primary input for the subsequent modules.

\subsection{Distribution Shift Detector}
\label{sec:distribution shift detector}
This module continuously monitors distributional divergence between historical data and current windows to detect drifts and thus trigger expert pool evolution.
We quantify this divergence via Maximum Mean Discrepancy (MMD), a kernel-based metric that measures the distance between distributions in a Reproducing Kernel Hilbert Space.
This process is governed by two key components detailed below.

\textbf{MMD-based Discrepancy Measurement.}
We define a reference window $\mathcal{W}^{\mathrm{ref}} = \{\mathbf{x}_i^{\mathrm{ref}}\}_{i=1}^{N_r}$ for the historical stable distribution and a sliding current window $\mathcal{W}^{\mathrm{cur}} = \{\mathbf{x}_j\}_{j=1}^{N_c}$.
The squared MMD distance is computed as:
\begin{equation}
\begin{aligned}
& \mathcal{D}_{\mathrm{mmd}}^2(\mathcal{W}^{\mathrm{ref}}, \mathcal{W}^{\mathrm{cur}})
= \frac{1}{N_r^2}\sum_{i,j} k(\mathbf{x}_i^{\mathrm{ref}}, \mathbf{x}_j^{\mathrm{ref}}) \\
&\quad - \frac{2}{N_r N_c}\sum_{i,j} k(\mathbf{x}_i^{\mathrm{ref}}, \mathbf{x}_j)
       + \frac{1}{N_c^2}\sum_{i,j} k(\mathbf{x}_i, \mathbf{x}_j)
\end{aligned}
\label{eq:mmd}
\end{equation}

where $k(\mathbf{x}_i, \mathbf{x}_j) = \exp(-\|\mathbf{x}_i - \mathbf{x}_j\|^2 / 2\sigma^2)$ is the RBF kernel.
The bandwidth $\sigma$ is determined via the median heuristic, set to the median pairwise distance among reference samples to ensure the kernel remains sensitive to current data.

\textbf{Adaptive Thresholding.}
To account for temporal fluctuations in noise levels, we adopt a dynamic thresholding mechanism rather than a fixed value.
We maintain a sliding history buffer $\mathcal{H}$ of the $n$ most recent MMD scores and compute the threshold $\epsilon$ via the $k$-sigma rule:
\begin{equation}
\epsilon = \mu_{\mathcal{H}} + \lambda \cdot \sigma_{\mathcal{H}}
\label{eq:k-sigma}
\end{equation}
where $\mu_{\mathcal{H}}$ and $\sigma_{\mathcal{H}}$ denote the mean and standard deviation of the scores in $\mathcal{H}$, respectively, and $\lambda$ is a hyperparameter controlling detection sensitivity.
A drift is identified when $\mathcal{D}_{\mathrm{mmd}}^2 > \epsilon$, triggering the Evolvable Expert Manager (Sec~\ref{Sec:Evolvable Expert Manager}).
The theoretical analysis in Appendix~\ref{appendix:theoretical analysis} proves via a generalization bound that increasing MMD directly elevates the upper bound of the target prediction risk, thereby justifying our MMD-based mechanism.

\subsection{Temporal Memory Router}
This module routes patch embeddings to specialized experts.
It addresses the limitations of conventional MoE routers, which process inputs in isolation and neglect temporal dependencies, often leading to erratic expert switching.
By integrating historical routing trajectories with temporal context, our approach ensures stable and coherent expert selection.
This module comprises three components:

\textbf{Sequential State Modeling.}
We formulate the routing process as a sequential modeling task to ensure temporal continuity.
The router maintains a hidden state $\mathbf{h}_t \in \mathbb{R}^{d_h}$ that encapsulates the evolving temporal dynamics, where $d_h$ represents hidden state dimension.
Given the patch embedding $\mathbf{x}_{p,t}$, the state is updated via a GRU:
\begin{equation}
\mathbf{h}_t = \text{GRU}(\phi(\mathbf{x}_{p,t}), \mathbf{h}_{t-1})
\label{eq:gru}
\end{equation}
where $\phi(\cdot)$ denotes an input projection layer.
By conditioning routing decisions on the historical state $\mathbf{h}_{t-1}$ rather than $\mathbf{x}_t$ in isolation, the router treats the input as a coherent sequence.
This temporal linkage is designed to mitigate abrupt expert switching between adjacent patches, aiming to facilitate a routing strategy that evolves smoothly.

\textbf{Scalable Top-k Dispatching.}
To efficiently accommodate the evolving expert pool $\mathcal{K}_t$ (Sec~\ref{Sec:Evolvable Expert Manager}), we employ a sparse gating strategy.
The router state $\mathbf{h}_t$ is projected to generate logits $\mathbf{l}_t \in \mathbb{R}^{|\mathcal{K}_t|}$.
To handle the dynamic expansion of experts, $\mathbf{l}_t$ is composed of concatenated base and drift heads.
The final sparse routing weights $\mathbf{g}_t$ are computed as:
\begin{equation}
\mathbf{g}_t = \text{Softmax}(\text{TopK}(\mathbf{l}_t, k))
\label{eq:gt}
\end{equation}
Thus, only the top-$k$ experts are activated, maintaining a constant $O(k)$ expert computation cost regardless of the pool size.

\textbf{Anomaly State Repository.}
To accelerate adaptation to recurring drifts, we introduce a memory bank $\mathcal{A}$.
Upon drift detection, the router archives the corresponding hidden state into $\mathcal{A}$ to preserve the specialized routing knowledge.
During routing, the router synthesizes a context-aware reference prototype $\mathbf{h}_{\mathrm{ref}}$ by retrieving relevant historical states.
Specifically, it computes the cosine similarity between the current hidden state $\mathbf{h}_t$ and stored historical states in $\mathcal{A}$, utilizing Softmax normalization to obtain attention weights for aggregation.
We employ an adaptive gated fusion mechanism to refine the current state:
\begin{equation}
\tilde{\mathbf{h}}_t = \alpha \cdot \mathbf{h}_t + (1 - \alpha) \cdot \mathbf{h}_{\mathrm{ref}}
\label{eq:anomaly_gate}
\end{equation}
where $\alpha \in [0, 1]$ is a learnable gating coefficient generated by a fusion network.
This mechanism enables the model to dynamically balance between the current observed dynamics and historical routing strategies.

\subsection{Evolvable Expert Manager}
\label{Sec:Evolvable Expert Manager}
This module governs the adaptation by maintaining the evolution of the expert pool.
We treat the expert pool as a dynamic system and implement a comprehensive lifecycle management strategy.
Crucially, this structural evolution is confined to the learning stage.
This training-time process is realized through three core mechanisms: Drift Pattern Profiler and Post-Addition Alignment focused on informed expansion to address emerging drifts, and Expert Usage Tracker focused on pruning to ensure long-term efficiency.

\textbf{Drift Pattern Profiler.}
To ensure targeted expansion, we implement a diagnostic mechanism that instantiates experts based on the specific failure mode of the current model.
Upon drift detection, a Drift Pattern Profiler identifies the dominant missing patterns by analyzing prediction residuals.
Specifically, it computes three disentangled scores to identify the dominant missing pattern:
(1) a \textit{Trend Score} based on the goodness-of-fit ($R^2$) of a linear regression over residuals;
(2) a \textit{Seasonality Score} derived from the spectral energy concentration in dominant frequencies;
and (3) a \textit{Fluctuation Score} measuring the ratio of high-frequency volatility.
Detailed implementation is provided in Appendix~\ref{appendix:drift pattern profiler implementation}.
A new expert of the matched type is then instantiated and added to the active expert set $\mathcal{K}_t$.

% \textbf{Localized Pre-training.}
\textbf{Post-Addition Alignment.}
To ensure the newly instantiated expert bridges the identified knowledge gap without destabilizing the backbone, we implement a targeted alignment phase.
We freeze the parameters of existing experts and the router backbone, fine-tuning only the newly added expert and the router's output head on the drift data, which is composed of the concatenated reference ($\mathcal{W}^{\mathrm{ref}}$) and current ($\mathcal{W}^{\mathrm{cur}}$) windows identified by the drift detector.
This allows the added expert to settle into the shifted distribution before being integrated into the global optimization loop.

\textbf{Expert Usage Tracker.}
To maintain an efficient architecture, we implement a pruning mechanism that handles transient or sporadic distribution shifts.
Specifically, we track the average routing weight of each expert within a fixed monitoring window.
To prevent premature removal due to short-term fluctuations, we enforce a patience constraint where experts are pruned only if they consistently fall below a threshold $\tau$ across $L$ consecutive windows.

\subsection{Heterogeneous Expert Design}
Conventional MoE frameworks typically employ identical expert architectures, which fundamentally restricts their ability to decouple complex temporal dynamics.
To address this limitation, we devise a heterogeneous expert pool that incorporates diverse inductive biases tailored to distinct temporal patterns.
For notational clarity, let $\mathbf{X}_p \in \mathbb{R}^{N \times D}$ denote the input patch embeddings.

\textbf{Identity Expert.}
To preserve raw information and ensure stable gradient flow, we include a linear shortcut.
\begin{equation}
    E_{\mathrm{id}}(\mathbf{X}_p) = \text{Linear}(\mathbf{X}_p)
    \label{eq: identity_expert}
\end{equation}

\textbf{Trend Expert.}
To extract the global trend of the data while filtering high-frequency noise, we employ average pooling followed by a non-linear projection.
\begin{equation}
    E_{\mathrm{trend}}(\mathbf{X}_p) = \text{MLP}(\text{AvgPool}(\mathbf{X}_p))
    \label{eq: trend_expert}
\end{equation}
Here, $\text{AvgPool}(\cdot)$ acts as a low-pass filter, making the expert focus on global trends.

\textbf{Seasonality Expert.}
To robustly capture periodic patterns amidst temporal noise, we operate in the frequency domain to exploit the inherent sparsity of cyclic signals.
We first transform the input to the frequency domain via Fast Fourier Transform (FFT).
Then, we employ a Frequency MLP to modulate the spectrum and filter out high-frequency noise.
After transforming the features back to the time domain via iFFT, we apply a learnable periodic activation to explicitly model cyclic behaviors:
\begin{equation}
\begin{gathered}
\mathbf{Z} = \text{iFFT}(\text{MLP}(\text{FFT}(\mathbf{X}_p))) \\ 
E_{\mathrm{sea}}(\mathbf{X}_p) = \text{Linear}\left( \text{Concat}\left[ \sin(\mathbf{Z}_{1}), \cos(\mathbf{Z}_{2}) \right] \right) 
\end{gathered} 
\label{eq: season_expert} 
\end{equation}
where $Z$ is split into two halves $\mathbf{Z}_{1}$ and $\mathbf{Z}_{2}$ along the channel dimension.
This enables the expert to learn a global periodic representation in the frequency domain while refining local cyclic dynamics in the time domain.

\textbf{Fluctuation Expert.}
To capture high-frequency volatility and sudden local shifts, we utilize 1D causal convolutions equipped with Gated Linear Units.
\begin{equation}
    E_{\mathrm{fluc}}(\mathbf{X}_p) = (\mathbf{X}_p * \mathbf{W}_{k}) \odot \sigma(\mathbf{X}_p * \mathbf{W}_{g})
\label{eq: fluctuation_expert}
\end{equation}
where $*$ denotes the convolution operator, $\odot$ is element-wise multiplication, and $\sigma$ is the sigmoid function.
This focuses on local receptive fields to model short-term volatility.

\begin{table*}[h]
\centering
\caption{
    Multivariate time series forecasting results.
    These are averaged over 4 prediction horizons: $\{24, 36, 48, 60\}$ (ILI benchmark) and $\{96, 192, 336, 720\}$ (All others).
    For look-back window, we utilize a length of $36$ for ILI and $96$ for others.
    Lower MSE and MAE values indicate superior accuracy.
    The best results are in \best{bold} and the second best are \second{underlined}.
    Detailed results are in Table \ref{tab:main_results_full} of Appendix.
}
\label{tab:main_results}
\renewcommand{\arraystretch}{1.1}
\resizebox{\textwidth}{!}{
\begin{tabular}{c|cc|cc|cc|cc|cc|cc|cc|cc|cc|cc cc}
\hline
\multirow{2}{*}{Models} & \multicolumn{2}{c|}{\textbf{Dynamic TMoE}} & \multicolumn{2}{c|}{TFPS} & \multicolumn{2}{c|}{ST-MTM} & \multicolumn{2}{c|}{RAFT} & \multicolumn{2}{c|}{TimeMixer} & \multicolumn{2}{c|}{FITS} & \multicolumn{2}{c|}{PatchTST} & \multicolumn{2}{c|}{TimesNet} & \multicolumn{2}{c|}{DLinear} & \multicolumn{2}{c}{FEDformer} \\
& \multicolumn{2}{c|}{\textbf{(Ours)}} & \multicolumn{2}{c|}{(2025)} & \multicolumn{2}{c|}{(2025)} & \multicolumn{2}{c|}{(2025)} & \multicolumn{2}{c|}{(2024)} & \multicolumn{2}{c|}{(2024)} & \multicolumn{2}{c|}{(2023)} & \multicolumn{2}{c|}{(2023)} & \multicolumn{2}{c|}{(2023)} & \multicolumn{2}{c}{(2022)} \\
\cmidrule(lr){2-3}
\cmidrule(lr){4-5}
\cmidrule(lr){6-7}
\cmidrule(lr){8-9}
\cmidrule(lr){10-11}
\cmidrule(lr){12-13}
\cmidrule(lr){14-15}
\cmidrule(lr){16-17}
\cmidrule(lr){18-19}
\cmidrule(lr){20-21}
Metric & MSE & MAE & MSE & MAE & MSE & MAE & MSE & MAE & MSE & MAE & MSE & MAE & MSE & MAE & MSE & MAE & MSE & MAE & MSE & MAE \\
\hline
Weather &
\best{0.240} & \best{0.270} &
\second{0.241} & \second{0.271} &
0.262 & 0.293 &
0.271 & 0.311 &
\best{0.240} & 0.272 &
0.249 & 0.277 &
0.258 & 0.281 &
0.259 & 0.287 &
0.267 & 0.317 &
0.309 & 0.360 \\

Exchange &
\second{0.351} & \best{0.397} &
0.395 & 0.414 &
0.408 & 0.432 &
0.407 & 0.418 &
0.374 & 0.411 &
0.353 & \second{0.400} &
0.385 & 0.411 &
0.416 & 0.443 &
\best{0.337} & 0.402 &
0.519 & 0.500 \\

ETTh1 &
\best{0.429} & \second{0.430} &
0.448 & 0.443 &
\second{0.432} & 0.433 &
\second{0.432} & 0.438 &
0.447 & 0.440 &
0.439 & \best{0.429} &
0.451 & 0.441 &
0.458 & 0.450 &
0.459 & 0.452 &
0.440 & 0.460 \\

ETTh2 &
0.368 & 0.398 &
0.380 & 0.403 &
\best{0.359} & \best{0.391} &
0.385 & 0.413 &
\second{0.365} & \second{0.395} &
0.375 & 0.398 &
0.366 & 0.395 &
0.414 & 0.427 &
0.498 & 0.479 &
0.434 & 0.447 \\

ETTm1 &
\best{0.376} & \best{0.394} &
0.395 & 0.407 &
0.401 & 0.406 &
0.382 & 0.401 &
\second{0.381} & 0.396 &
0.414 & 0.408 &
\second{0.381} & \second{0.395} &
0.400 & 0.406 &
0.406 & 0.410 &
0.448 & 0.452 \\

ETTm2 &
\best{0.269} & \best{0.318} &
0.276 & \second{0.321} &
0.280 & 0.325 &
0.281 & 0.331 &
\second{0.275} & 0.323 &
0.286 & 0.328 &
0.285 & 0.328 &
0.291 & 0.333 &
0.310 & 0.367 &
0.305 & 0.349 \\

Traffic &
\second{0.479} & \best{0.288} &
\best{0.457} & 0.304 &
0.556 & 0.337 &
0.537 & 0.347 &
0.485 & \second{0.298} &
0.627 & 0.377 &
0.488 & 0.309 &
0.620 & 0.336 &
0.625 & 0.385 &
0.610 & 0.376 \\

Electricity &
\best{0.170} & \best{0.268} &
0.183 & 0.280 &
0.208 & 0.302 &
0.184 & 0.287 &
\second{0.182} & \second{0.273} &
0.216 & 0.293 &
0.196 & 0.281 &
0.193 & 0.295 &
0.210 & 0.296 &
0.214 & 0.327 \\

ILI &
\second{1.981} & \second{0.888} &
2.642 & 0.991 &
2.820 & 1.076 &
5.916 & 1.658 &
2.941 & 1.145 &
2.565 & 1.107 &
\best{1.735} & \best{0.823} &
2.139 & 0.931 &
2.915 & 1.188 &
2.847 & 1.144 \\
\hline

\#1st &
\multicolumn{2}{c|}{\best{11}} &
\multicolumn{2}{c|}{1} &
\multicolumn{2}{c|}{2} &
\multicolumn{2}{c|}{0} &
\multicolumn{2}{c|}{1} &
\multicolumn{2}{c|}{1} &
\multicolumn{2}{c|}{2} &
\multicolumn{2}{c|}{0} &
\multicolumn{2}{c|}{1} &
\multicolumn{2}{c}{0} \\
\hline

\end{tabular}
}
\end{table*}

\textbf{Cyclic Relation Modeling.}
To capture evolving variable dependencies, we employ a residual relation learning mechanism.
Inspired by CycleNet~\cite{Lin2024CycleNet}, we maintain a learnable periodic prototype $\mathcal{R}_{\mathrm{cycle}} \in \mathbb{R}^{L_{\mathrm{cyc}} \times V \times V}$ to store historical dependency patterns, where $L_{\mathrm{cyc}}$ denotes the cycle length.
We derive the instantaneous correlation $\mathcal{R}_{\mathrm{cur}}$ directly from the input features to represent the current variable relationship.
Crucially, the final adjacency matrix is reconstructed by adding the learned relation residual to the periodic prototype:
\begin{equation}
\mathcal{R}_{\mathrm{final}} = \mathcal{R}_{\mathrm{cycle}}[t] + \text{MLP}(\mathcal{R}_{\mathrm{cur}} - \mathcal{R}_{\mathrm{cycle}}[t])
\end{equation}
This design leverages stable historical priors while adaptively capturing dynamic shifts through the residual term.

\subsection{Output Projection}
Let $\mathbf{H}^{(L)} \in \mathbb{R}^{B \times V \times N \times D}$ denote the output representations of the final MoE layer, where $B$ is the batch size.
First, we perform a flattening operation to merge patch and feature dimensions, yielding a comprehensive feature vector $\mathbf{h}_{b,v} \in \mathbb{R}^{N \cdot D}$  for each variate.
Subsequently, a linear projection layer maps this vector directly to the forecasting horizon $T$, generating the final prediction $\hat{\mathbf{Y}} \in \mathbb{R}^{B \times T \times V}$.

\section{Experiments}
\subsection{Experimental Settings}
\textbf{Datasets.}
To evaluate the performance of Dynamic TMoE, we conducted extensive experiments on nine widely used real-world datasets: ETT (4 subsets)~\cite{Zhou2021Informer}, Electricity~\cite{electricity_dataset}, Exchange~\cite{Lai2018Exchange}, Traffic~\cite{Lai2018Exchange}, Weather~\cite{wu2021autoformer}, and Illness (ILI)~\cite{wu2021autoformer}, covering domains from energy to healthcare.
The detailed description of these datasets is presented in Appendix~\ref{appendix:datasets}.

\textbf{Implementation Details.}
All experiments are implemented in PyTorch~\cite{paszke2019pytorch} and conducted on a server with two NVIDIA A100 (80G) GPUs.
The model is trained using the Mean Squared Error (MSE) loss function and optimized via the AdamW optimizer.
The initial learning rate is tuned within the range of $4 \times 10^{-4}$ to $2.4 \times 10^{-3}$ for different datasets, with an early stopping patience of $10$ epochs.
We use Mean Squared Error (MSE) and Mean Absolute Error (MAE) as evaluation metrics.
Following standard protocols, the input look-back window length is $L=96$ and prediction horizons are $T \in \{96, 192, 336, 720\}$ for most datasets, except ILI which uses $L=36$ and $T \in \{24, 36, 48, 60\}$.

\subsection{Main Results}
\textbf{Baselines.}
We compare Dynamic TMoE with nine widely recognized SOTA baselines, including Transformer-based: ST-MTM~\cite{Seo2025STMTM}, PatchTST~\cite{Nie2023PatchTST}, FEDformer~\cite{Zhou2022FEDformer};
CNN-based: TimesNet~\cite{Wu2023Timesnet};
Linear or MLP-based: RAFT~\cite{Han2025RAFT}, TimeMixer~\cite{Wang2024Timemixer}, FITS~\cite{Xu2024FITS}, DLinear~\cite{Zeng2023Dlinear};
We also compare with the latest SOTA MoE-based model: TFPS~\cite{Sun2025TFPS}.

\textbf{Results Analysis.}
Dynamic TMoE establishes a new state-of-the-art across a wide range of benchmarks, as detailed in Table~\ref{tab:main_results}. 
Out of $18$ evaluation metrics, our model achieves a top-2 ranking in $16$ instances (ranking $1^{st}$ in $11$ and $2^{nd}$ in $5$).
The performance is particularly dominant on datasets characterized by high non-stationarity and intricate temporal dependencies, such as Electricity, Weather, and the ETT series.
Compared to the leading MoE-based baseline TFPS, our model reduces average MSE by $5.9\%$ and MAE by $3.6\%$, surpassing it on $8$ out of $9$ datasets.
More broadly, when benchmarking against all SOTA baselines across the nine datasets, Dynamic TMoE reduces the average MSE by 10.4\% and MAE by 7.8\%.
These performance gains validate that our framework has a significant effect in addressing the inherent challenges of non-stationary forecasting.

\subsection{Ablation Study}
\label{Sec:Ablation Study}
We conduct systematic ablations to evaluate the individual contributions of the three core dynamic evolution mechanisms in Dynamic TMoE: (1) the drift-aware adaptation and the temporal memory router, (2) the heterogeneous expert design coupled with relation modeling, and (3) the specific adaptation strategies for drift handling.
Detailed specifications for each ablation variant are provided in Appendix~\ref{appendix:full ablations}.

\textbf{Analysis of dynamic mechanism and temporal memory router.}
Table~\ref{tab:ablation_main} shows that both drift-aware adaptation and temporal consistency are vital for handling non-stationarity.
Excluding drift-aware adaptation leads to significant degradation, confirming that static architectures fail to accommodate evolving distributions. 
Similarly, substituting our GRU-based router with non-temporal alternatives increases MSE by $4.2\%$ and MAE by $1.8\%$ across six benchmarks.
This validates that sequential memory is essential for accurate expert assignment in non-stationary environments.
\begin{table}[h]
\centering
\caption{
    Ablation study on drift-aware adaptation, temporal memory router, and anomaly memory components.
    The checkmark (\checkmark) indicates the component is included, while (\texttimes) indicates it is excluded.
    Detailed results are provided in Table~\ref{tab:ablation_main_full} of the Appendix.
}
\label{tab:ablation_main}
\setlength{\tabcolsep}{2pt}
\renewcommand{\arraystretch}{1.2}
\resizebox{\columnwidth}{!}{
\begin{tabular}{c|c|C{1cm}C{1cm}C{1cm}|c|cc|cc}
\hline
\multirow{2}{*}{Model} & \multirow{2}{*}{\makecell{Drift-Aware\\Adaptation}} & \multicolumn{3}{c|}{Temporal Memory Router} & \multirow{2}{*}{\makecell{Anomaly\\Memory}} & \multicolumn{2}{c|}{ETTh1} & \multicolumn{2}{c}{Weather} \\
\cmidrule(lr){3-5}
\cmidrule(lr){7-8}
\cmidrule(lr){9-10}
& & GRU & Linear & MLP & & MSE & MAE & MSE & MAE \\
\hline
\ding{172} & \checkmark & \checkmark & & & \checkmark & \best{0.429} & \best{0.430} & \best{0.240} & \best{0.270} \\
\ding{173} & \texttimes & \checkmark & & & \checkmark & 0.436 & 0.432 & 0.246 & 0.274 \\
\ding{174} & \checkmark & & \checkmark & & \checkmark & 0.436 & 0.430 & 0.247 & 0.275 \\
\ding{175} & \checkmark & & & \checkmark & \checkmark & 0.438 & 0.434 & 0.245 & 0.273 \\
\ding{176} & \checkmark & \checkmark & & & \texttimes & 0.438 & 0.434 & 0.246 & 0.274 \\
\hline
\end{tabular}
}
\end{table}

\textbf{Analysis of expert diversity and relation modeling.}
Table~\ref{tab:ablation_experts} demonstrates that converting our heterogeneous expert pool to a homogeneous one results in a $4.5\%$ increase in MSE and a $2.1\%$ increase in MAE on six benchmarks.
This validates that specialized experts more effectively decouple complex distributions for robust generalization.
Additionally, removing the cyclic relation layer causes a universal performance drop, leading to a $4.5\%$ increase in MSE and $1.8\%$ in MAE, highlighting the necessity of explicitly modeling inter-variable correlations alongside temporal patterns.

\begin{table}[h]
\centering
\caption{
    Ablation study on expert diversity and expert relation layer components.
    We compare heterogeneous experts against homogeneous variants and evaluate the impact of the relation layer.
    Detailed results are provided in Table~\ref{tab:ablation_experts_full} of the Appendix.
}
\label{tab:ablation_experts}
\setlength{\tabcolsep}{2pt}
\renewcommand{\arraystretch}{1.1}
\resizebox{\columnwidth}{!}{
\begin{tabular}{c|ccccc|c|cc|cc}
\hline
\multirow{2}{*}{Model} & \multicolumn{5}{c|}{Experts Diversity} & \multirow{2}{*}{\makecell{Relation\\Layer}} & \multicolumn{2}{c|}{ETTh1} & \multicolumn{2}{c}{Weather} \\
\cmidrule(lr){2-6}
\cmidrule(lr){8-9}
\cmidrule(lr){10-11}
& \makecell{Hetero-\\geneous} & \makecell{All\\identity} & \makecell{All\\trend} & \makecell{All\\seasonality} & \makecell{All\\fluctuation} & & MSE & MAE & MSE & MAE \\
\hline
\ding{172} & \checkmark & & & & & \checkmark & \best{0.429} & \best{0.430} & \best{0.240} & \best{0.270} \\
\ding{173} & \checkmark & & & & & \texttimes & 0.441 & 0.434 & 0.247 & 0.275 \\
\ding{174} & & \checkmark & & & & \checkmark & 0.440 & 0.434 & 0.246 & 0.274 \\
\ding{175} & & & \checkmark & & & \checkmark & 0.443 & 0.435 & 0.246 & 0.274 \\
\ding{176} & & & & \checkmark & & \checkmark & 0.438 & 0.431 & 0.246 & 0.274 \\
\ding{177} & & & & & \checkmark & \checkmark & 0.438 & 0.431 & 0.246 & 0.274 \\
\hline
\end{tabular}
}
\end{table}

\textbf{Analysis of adaptation strategies.}
Table~\ref{tab:ablation_dynamic} validates our adaptive design choices across the lifecycle of a distribution shift.
The MMD-based detector significantly surpasses a simple mean-variance method, which detects drifts by monitoring whether the deviation in mean or variance between the reference and current windows exceeds a threshold, yielding an $8.8\%$ performance gain on the Exchange dataset.
Moreover, our Post-Addition Alignment strategy proves superior to directly fine-tuning the base experts, which suffers from catastrophic forgetting, evidenced by a substantial $13.0\%$ MSE increase on ILI when using the latter.
Regarding the selection of new expert types, the Drift Pattern Profiler is superior to random selection.
Lastly, disabling expert pruning leads to a decrease in performance, indicating that removing redundant experts helps mitigate overfitting.
\begin{table}[h]
\centering
\caption{
    Ablation study on drift detection, new expert alignment strategy, new expert selection, and expert pruning.
    We compare MMD-based drift detection against simple detection, our Post-Addition Alignment against directly fine-tuning, the Drift Pattern Profiler versus random selection, and the impact of experts pruning.
    Detailed results are provided in Table~\ref{tab:ablation_dynamic_full} of the Appendix.
}
\label{tab:ablation_dynamic}
\setlength{\tabcolsep}{2.5pt}
\renewcommand{\arraystretch}{1.25}
\resizebox{\columnwidth}{!}{
\begin{tabular}{c|cc|cc|cc|c|cc|cc}
\hline
\multirow{3}{*}{Model} & \multicolumn{2}{c|}{Drift Detection} & \multicolumn{2}{c|}{New Expert Alignment Strategy} & \multicolumn{2}{c|}{New Expert Selection} & \multirow{3}{*}{\makecell{Experts\\Pruning}} & \multicolumn{2}{c}{Exchange} & \multicolumn{2}{c}{ILI}\\
\cmidrule(lr){2-3}
\cmidrule(lr){4-5}
\cmidrule(lr){6-7}
\cmidrule(lr){9-10}
\cmidrule(lr){11-12}
& MMD & Simple & \makecell{Post-Addition\\Alignment} & \makecell{Fine-tune the \\Base Experts} & \makecell{Drift Pattern\\Profiler} & Random & & MSE & MAE & MSE & MAE \\
\hline
\ding{172} & \checkmark & & \checkmark & & \checkmark & & \checkmark & \best{0.351} & \best{0.397} & \best{1.981} & \best{0.888}\\
\ding{173} & & \checkmark & \checkmark & & \checkmark & & \checkmark & 0.382 & 0.413 & 2.082 & 0.904 \\
\ding{174} & \checkmark & & & \checkmark & \checkmark & & \checkmark & 0.351 & 0.397 & 2.238 & 0.938 \\
\ding{175} & \checkmark & & \checkmark & & & \checkmark & \checkmark & 0.365 & 0.404 & 2.260 & 0.952 \\
\ding{176} & \checkmark & & \checkmark & & \checkmark & & \texttimes & 0.355 & 0.399 & 2.260 & 0.927 \\
\hline
\end{tabular}
}
\end{table}
\begin{figure}[h]
  \centering
  \includegraphics[width=\linewidth]{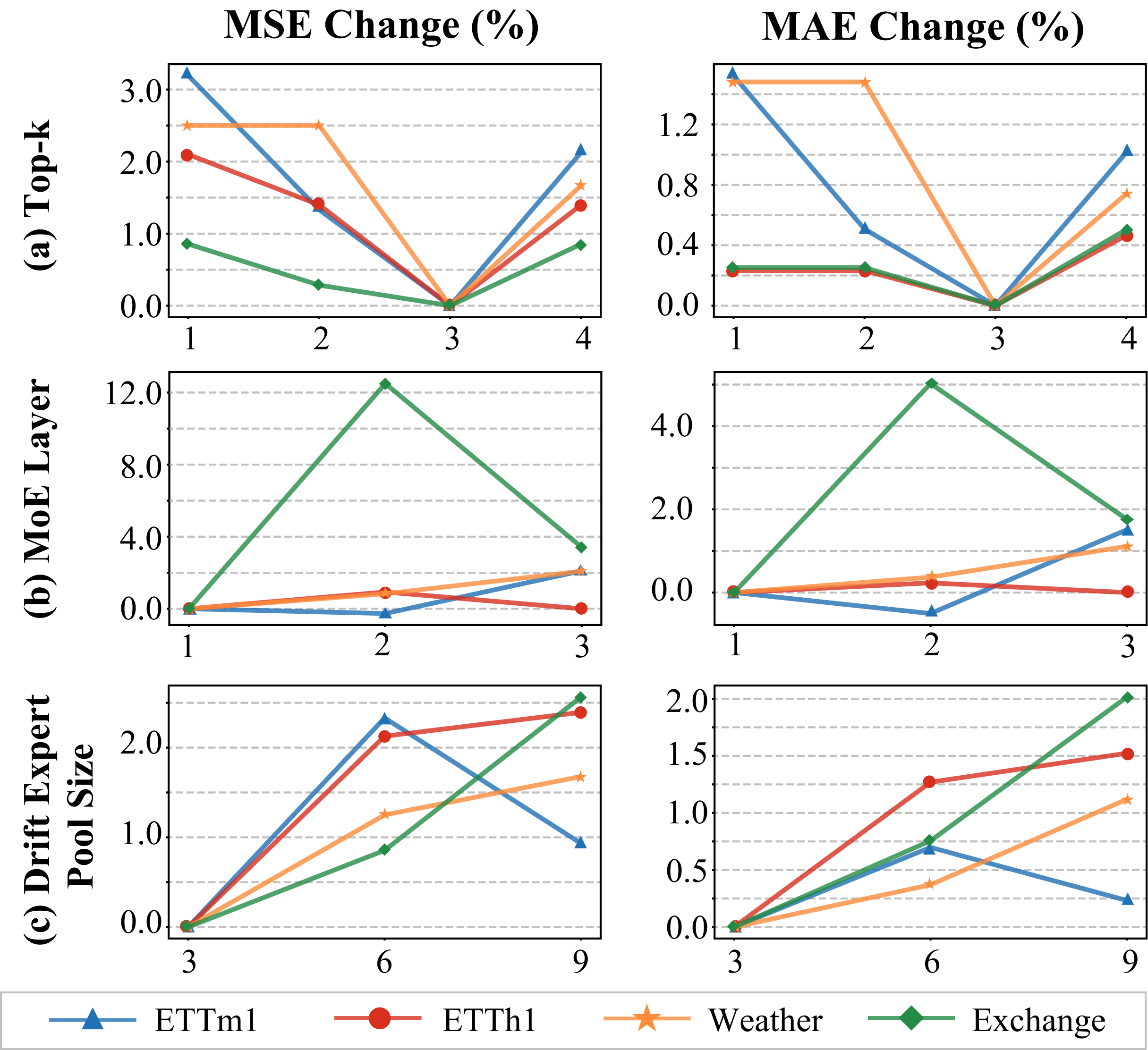}
  \caption{
    Hyperparameter sensitivity analysis results display the relative percentage change in MSE and MAE for: 
    \textbf{(a)} the number of activated experts (Top-$k$) relative to the baseline $k=3$; 
    \textbf{(b)} the number of stacked MoE layers relative to the baseline $L=1$; 
    and \textbf{(c)} the drift expert pool size relative to the baseline $S=3$.
  }
  \label{fig:parameter_exp}
\end{figure}

\subsection{Hyperparameter Sensitivity}
To verify the robustness of Dynamic TMoE, we conduct sensitivity analysis on three critical hyperparameters: the number of activated experts (Top-$k$), the number of stacked MoE layers ($L$) and drift expert pool size ($S$).
\begin{table*}[h]
\centering
\caption{
Detailed resource efficiency comparison between Dynamic TMoE and MoE-based baselines (TFPS, FreqMoE, DUET) across nine datasets.
We report the number of parameters (M), average inference time per sample (ms/sample), and GPU memory usage (MB).
}
\label{tab:efficiency_analysis}
\renewcommand{\arraystretch}{1.15}
\resizebox{\textwidth}{!}{
\begin{tabular}{c|ccc|ccc|ccc|ccc}
\hline
\multirow{2}{*}{\textbf{Dataset}} 
& \multicolumn{3}{c|}{\textbf{Dynamic TMoE}} 
& \multicolumn{3}{c|}{\textbf{TFPS}} 
& \multicolumn{3}{c|}{\textbf{FreqMoE}} 
& \multicolumn{3}{c}{\textbf{DUET}} \\
\cmidrule(lr){2-4} \cmidrule(lr){5-7} \cmidrule(lr){8-10} \cmidrule(lr){11-13}
& \textbf{Param (M)} & \textbf{Time (ms)} & \textbf{Mem (MB)} 
& \textbf{Param (M)} & \textbf{Time (ms)} & \textbf{Mem (MB)} 
& \textbf{Param (M)} & \textbf{Time (ms)} & \textbf{Mem (MB)} 
& \textbf{Param (M)} & \textbf{Time (ms)} & \textbf{Mem (MB)} \\
\hline
\textbf{ETTh1} & 8.16 & 0.2783 & 51.15 & 11.16 & 0.1096 & 89.42 & 0.22 & 0.8795 & 26.01 & 3.78 & 0.8402 & 1445.17 \\
\textbf{ETTh2} & 9.33 & 0.4336 & 55.23 & 10.76 & 0.0955 & 86.26 & 0.22 & 0.7752 & 26.01 & 3.78 & 0.8611 & 1446.52 \\
\textbf{ETTm1} & 3.72 & 0.1750 & 31.98 & 219.48 & 0.2133 & 1559.69 & 0.22 & 0.2127 & 26.37 & 3.98 & 0.9322 & 1498.76 \\
\textbf{ETTm2} & 7.83 & 0.1833 & 49.52 & 103.79 & 0.1398 & 673.99 & 0.22 & 0.2238 & 26.37 & 0.45 & 0.8609 & 1486.81 \\
\textbf{Weather} & 9.40 & 0.1215 & 65.94 & 43.21 & 0.1497 & 354.00 & 0.22 & 0.2584 & 55.59 & 4.83 & 0.9759 & 1612.77 \\
\textbf{ECL} & 24.01 & 3.2829 & 388.79 & 898.38 & 2.7727 & 7052.74 & 0.22 & 0.6436 & 686.49 & 4.83 & 2.9624 & 3039.94 \\
\textbf{Exchange} & 13.81 & 0.8655 & 74.65 & 12.23 & 0.3646 & 108.25 & 0.22 & 2.2163 & 27.81 & 27.89 & 0.8535 & 1434.66 \\
\textbf{ILI} & 115.40 & 5.1590 & 466.24 & 1.01 & 2.7145 & 16.80 & 0.01 & 13.5581 & 11.24 & 1.28 & 3.5924 & 1398.28 \\
\textbf{Traffic} & 128.69 & 8.6058 & 1218.54 & 2360.83 & 4.6406 & 18547.39 & 0.22 & 1.0597 & 1901.79 & 9.04 & 12.6215 & 18024.85 \\
\hline
\end{tabular}
}
\end{table*}
\begin{figure*}[h]
  \centering  \includegraphics[width=\textwidth]{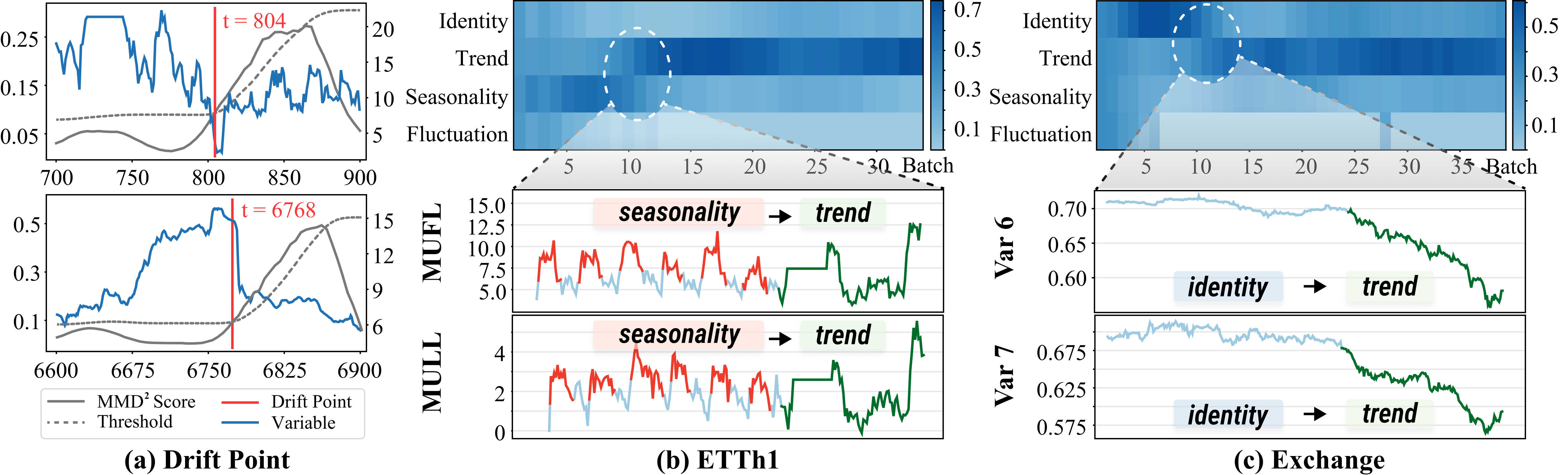}
  \caption{
    Visualization of dynamic mechanism.
    \textbf{(a)} Raw series and MMD scores on ETTh1 and Weather.
    The MMD scores spike and cross dynamic thresholds exactly at drift points, triggering adaptation.
    \textbf{(b)} Router weight heatmaps and corresponding raw series on ETTh1.
    As the data pattern evolves, the router dynamically allocates dominance to the semantically matching expert.
    \textbf{(c)} Router weight heatmaps and corresponding raw series on Exchange.
  }
  \label{fig:case_study}
\end{figure*}

\textbf{Analysis of the number of activated experts.}
We vary $k$ within the set $\{1, 2, 3, 4\}$ to investigate the optimal number of activated experts.
As illustrated in Figure~\ref{fig:parameter_exp}(a), the model consistently achieves the best performance when $k = 3$ across four datasets.
While a single expert ($k=1$) fails to capture complex temporal dynamics, increasing $k$ to 4 leads to degradation, likely due to irrelevant noise.
Therefore, $k = 3$ strikes the optimal balance between sparse specialization and cooperative modeling.

\textbf{Analysis of the number of stacked MoE layers.}
Varying the stacked MoE layers from 1 to 3 (Figure~\ref{fig:parameter_exp}(b)) reveals a general preference for shallower structures across most datasets.
While ETTm1 exhibits a marginal performance gain with 2 layers, Exchange and Weather perform best with a single layer.
Notably, increasing depth on Exchange results in significant performance degradation, indicating that deeper models are prone to overfitting.
Consequently, we adopt a 1 or 2-layer structure as our default configuration.

\textbf{Analysis of drift expert pool size.}
Lastly, we examine the impact of the drift expert pool size by varying it within $\{3, 6, 9\}$.
As shown in Figure~\ref{fig:parameter_exp}(c), a compact pool of 3 experts achieves the best performance.
Enlarging the pool size to 6 or 9 leads to general performance degradation.
This indicates that larger pools introduce unnecessary complexity and overfitting risks.

\subsection{Efficiency Analysis}
\textbf{Results.}
To comprehensively assess resource efficiency, we evaluate Dynamic TMoE against three recent MoE-based baselines: TFPS, FreqMoE~\cite{liu2025freqmoe}, and DUET~\cite{Qiu2025DUET}.
As detailed in Table~\ref{tab:efficiency_analysis}, Dynamic TMoE offers a competitive balance between computational cost and architectural capacity.
While FreqMoE features a lightweight parameter count, its memory consumption scales poorly on datasets with complex temporal dynamics, such as ECL (686.49 MB) and Traffic (1901.79 MB).
In contrast, Dynamic TMoE effectively avoids the parameter explosion observed in TFPS and the severe memory bottleneck inherent to DUET.
Overall, Dynamic TMoE emerges as the most memory-efficient model on average and achieves a faster average inference time compared to both FreqMoE and DUET.
These results highlight that our framework scales elegantly without sacrificing inference speed or memory efficiency.

\textbf{Latency Trade-off.}
Dynamic TMoE exhibits higher end-to-end inference latency than TFPS, which reflects an architectural trade-off.
Conventional MoE frameworks typically employ memoryless and stateless routing mechanisms that are highly parallelizable and computationally fast.
However, such static routing ignores historical context and frequently results in unstable or suboptimal expert selection when facing non-stationary data streams.
In contrast, our Temporal Memory Router treats the routing process as a sequential modeling task and utilizes a recurrent structure to ensure coherent expert assignment.
The inherent sequential dependencies of this recurrent mechanism limit the potential for massive parallelization, thereby introducing the observed latency overhead.

\subsection{Case Study}
To validate the interpretability and efficacy of Dynamic TMoE, we visualize the internal decision-making process.

Figure~\ref{fig:case_study}(a) validates the reliability of our drift detector.
On both datasets, the drift detector demonstrates high sensitivity to the onset of distribution shifts.
This prompt and precise detection is crucial for our dynamic mechanism, ensuring that the drift expert pool and model adaptation are triggered exactly when non-stationarity emerges.

Figure~\ref{fig:case_study}(b)-(c) provides dual validation for both the temporal memory router and the semantic rationality of our heterogeneous experts.
Taking ETTh1 (b) as an example, when the data characteristics physically transition from stable high-frequency cycles to a relatively obvious trend pattern, the model autonomously shifts the dominant weights from the seasonality expert to the trend expert.
This alignment validates that our experts genuinely specialize in distinct physical patterns, while the router accurately adapts to statistical shifts.
Consequently, the framework effectively tackles the challenge of distribution shifts by leveraging the synergy between specialized representation learning and dynamic architectural adaptation.

\section{Conclusion}
In this paper, we presented Dynamic TMoE, a drift-aware framework that overcomes the structural rigidity of conventional MoE architectures by treating the expert pool as an evolvable system.
By integrating MMD-based drift detection with a heterogeneous expert pool, our model dynamically expands its capacity to accommodate emerging distribution shifts.
Furthermore, the proposed temporal memory router resolves inconsistent gating by employing a recurrent mechanism to maintain temporal continuity and an anomaly repository to retrieve stored hidden states.
Empirical results confirm that Dynamic TMoE significantly outperforms state-of-the-art baselines.
Future work will explore test-time adaptation to enable continuous expert evolution during inference without offline re-training.

\section*{Acknowledgements}
This work was supported by Zhejiang Provincial Natural Science Foundation of China (LD25F020003), NSFC (62421003, 62402421), and Ningbo Yongjiang Talent Programme (2024A-399-G).

\section*{Impact Statement}
This paper presents work whose goal is to advance the field of machine learning. There are many potential societal consequences of our work, none of which we feel must be specifically highlighted here.

% In the unusual situation where you want a paper to appear in the
% references without citing it in the main text, use \nocite
% \nocite{langley00}

\bibliography{example_paper}
\bibliographystyle{icml2026}

%%%%%%%%%%%%%%%%%%%%%%%%%%%%%%%%%%%%%%%%%%%%%%%%%%%%%%%%%%%%%%%%%%%%%%%%%%%%%%%
%%%%%%%%%%%%%%%%%%%%%%%%%%%%%%%%%%%%%%%%%%%%%%%%%%%%%%%%%%%%%%%%%%%%%%%%%%%%%%%
% APPENDIX
%%%%%%%%%%%%%%%%%%%%%%%%%%%%%%%%%%%%%%%%%%%%%%%%%%%%%%%%%%%%%%%%%%%%%%%%%%%%%%%
%%%%%%%%%%%%%%%%%%%%%%%%%%%%%%%%%%%%%%%%%%%%%%%%%%%%%%%%%%%%%%%%%%%%%%%%%%%%%%%
\newpage
\appendix
\onecolumn
\section{Datasets}
\label{appendix:datasets}
To verify the performance of Dynamic TMoE under various distribution shift scenarios, we utilized nine real-world benchmark datasets.
These datasets represent a wide range of domains, sampling frequencies, and time series characteristics.

\textbf{ETT (Electricity Transformer Temperature)}~\cite{Zhou2021Informer}
The ETT dataset is a crucial benchmark in the energy sector, collected from two electricity transformers over a period of two years (July 2016 to July 2018).
The dataset comprises \textbf{\textit{four subsets}} based on different sampling intervals.
ETTh1 and ETTh2 are recorded at an hourly frequency.
ETTm1 and ETTm2 are recorded at a 15-minute frequency.

\textbf{ECL (Electricity)}~\cite{electricity_dataset}
This dataset contains the hourly electricity consumption of 321 clients, recorded from 2012 to 2014.
The data reflects distinct daily and weekly patterns typical of user consumption behavior.

\textbf{Exchange}~\cite{Lai2018Exchange}
The Exchange dataset comprises the daily exchange rates of eight distinct countries (Australia, the United Kingdom, Canada, Switzerland, China, Japan, New Zealand, and Singapore) against the US dollar, collected from 1990 to 2016.
This dataset is characterized by non-periodicity and high volatility due to economic fluctuations.

\textbf{Traffic}~\cite{Lai2018Exchange}
This dataset describes the road occupancy rates measured by 862 sensors on San Francisco Bay Area freeways.
The data is sampled hourly from 2015 to 2016.
It exhibits clear spatial dependencies and circadian rhythms but also contains irregularities caused by traffic events.

\textbf{Weather}~\cite{wu2021autoformer}
Recorded by the Max Planck Institute for Biogeochemistry, this dataset includes 21 meteorological indicators collected every 10 minutes throughout 2020.

\textbf{ILI (Illness)}~\cite{wu2021autoformer}
Collected by the Centers for Disease Control and Prevention (CDC) of the United States, this dataset records the weekly ratio of patients with influenza-like illness to the total number of patients from 2002 to 2021.
It features a small sample size and strong but shifting cyclic patterns corresponding to flu seasons.

Details for each dataset are provided in Table~\ref{tab:datasets}.
\begin{table*}[h]
  \caption{
  Detailed statistical information of the nine real-world benchmark datasets used for evaluation. The table summarizes the multivariate dimensions, sampling frequency, prediction length, and dataset split information.
  Among them, Dataset Split denotes the total number of time points in (Train, Validation, Test) split respectively.
  }
  \label{tab:datasets}
  \begin{center}
    \begin{small}
        \renewcommand{\arraystretch}{1.25}
        \setlength{\tabcolsep}{10pt}
        \begin{tabular}{l|c|c|c|c|c}
            \toprule
            \textbf{Dataset} & \textbf{Variables} & \textbf{Frequency} & \textbf{Dataset Split} & \textbf{Prediction Length} & \textbf{Domain} \\
            \midrule
            ETTh1 & 7 & 1 Hour & (8545, 2881, 2881) & \{96, 192, 336, 720\} & Energy \\
            ETTh2 & 7 & 1 Hour & (8545, 2881, 2881) & \{96, 192, 336, 720\} & Energy \\
            ETTm1 & 7 & 15 Minutes & (34465, 11521, 11521) & \{96, 192, 336, 720\} & Energy \\
            ETTm2 & 7 & 15 Minutes & (34465, 11521, 11521) & \{96, 192, 336, 720\} & Energy \\
            Traffic & 862 & 1 Hour & (12185, 1757, 3509) & \{96, 192, 336, 720\} & Transportation \\
            Weather & 21 & 10 Minutes & (36792, 5271, 10540) & \{96, 192, 336, 720\} & Climatology \\
            Electricity & 321 & 1 Hour & (18317, 2633, 5261) & \{96, 192, 336, 720\} & Energy \\
            Exchange & 8 & 1 Day & (5215, 761, 1518) & \{96, 192, 336, 720\} & Economics \\
            Illness & 7 & 1 Week & (653, 110, 206) & \{24, 36, 48, 60\} & Healthcare \\
            \bottomrule
        \end{tabular}
    \end{small}
  \end{center}
\end{table*}

\section{Implementation Details}
\subsection{Evaluation Metrics}
To rigorously evaluate the forecasting performance of Dynamic TMoE, we use two conventional metrics Mean Squared Error (MSE) and Mean Absolute Error (MAE) on normalized data.
Given the ground truth $\mathbf{Y} = \{y_i\}_{i=1}^N$ and corresponding predictions $\mathbf{\hat{Y}} = \{\hat{y}_i\}_{i=1}^N$, where $N$ denotes the total number of data points across the prediction horizon and test set, the metrics are defined as:
\[
\begin{aligned}
\text{MSE} = \frac{1}{N} \sum_{i=1}^{N} (y_i - \hat{y}_i)^2,
\quad
\text{MAE} = \frac{1}{N} \sum_{i=1}^{N} |y_i - \hat{y}_i|
\end{aligned}
\]
Lower values for both metrics indicate better model performance.

\subsection{Detailed Experiment Settings}
All experiments are implemented in PyTorch~\cite{paszke2019pytorch} and conducted on a server equipped with two NVIDIA A100 (80G) GPUs.
The model is optimized using the AdamW optimizer with the Mean Squared Error (MSE) loss function, while performance is evaluated using both MSE and MAE.
Following standard benchmarks, we fix the look-back window length $L=96$ for all datasets except ILI ($L=36$), and select prediction horizons $T$ from $\{96, 192, 336, 720\}$ ($\{24, 36, 48, 60\}$ for ILI).
By default, the Dynamic TMoE is stacked with 1 or 2 layers, utilizing a 1 or 2-layer RNN for the router with the top-$k$ set to 3.
For the drift adaptation mechanism, the MMD detector adopts a dynamic threshold with a coefficient $\lambda=3.0$.
The batch size is set between 32 and 256.
We perform a grid search to determine dataset-specific hyperparameters.
The initial learning rate is tuned within the range of $4 \times 10^{-4}$ to $2.4 \times 10^{-3}$ for different datasets.
Detailed model configurations, including patch length $P$, stride $S$, and specific learning rates for each dataset, are summarized in Table~\ref{tab:experiment_detail}.

\subsection{Drift Pattern Profiler Implementation}
\label{appendix:drift pattern profiler implementation}
To ensure the Evolvable Expert Manager instantiates the most appropriate expert type upon drift detection, we implement a Drift Pattern Profiler.
This module operates on the learning discrepancy between the target ground truth $G$ and the current model's output $\hat{G}_{base}$.
Let $E \in \mathbb{R}^{N \times P \times D}$ denote the residual error, where $E = G - \hat{G}_{base}$.
We first normalize $E$ to zero mean and unit variance per sample to ensure scale invariance:
\begin{equation}
    \tilde{E} = \frac{E - \mu_E}{\sigma_E + \epsilon}
\end{equation}
The Profiler then computes three distinct scores: $S_{trend}$, $S_{sea}$, and $S_{fluc}$ to categorize the drift into one of three prototypes.

\textbf{Trend Score ($S_{trend}$).}
To assess whether the drift is driven by a shift in the global trend, we fit a linear regression model to the normalized residuals $\tilde{E}$ over the time axis $t \in [-1, 1]$.
We calculate the coefficient of determination ($R^2$) to measure how well the linear trend explains the residual variance:
\begin{equation}
    S_{trend} = \text{Avg} \left( \max \left( 0, 1 - \frac{\sum (\tilde{E} - \tilde{E}_{fit})^2}{\sum \tilde{E}^2} \right) \right)
\end{equation}
where $\tilde{E}_{fit}$ represents the fitted linear values.
A high $S_{trend}$ indicates that the model is failing to capture a clear directional shift.

\textbf{Seasonality Score ($S_{sea}$).}
To detect emerging periodic patterns, we analyze the residuals in the frequency domain.
We compute the power spectrum via the Fast Fourier Transform (FFT): $\mathcal{P} = |\text{FFT}(\tilde{E})|^2$.
The seasonality score is defined as the concentration of energy in the top-$k$ dominant frequencies:
\begin{equation}
    S_{sea} = \text{Avg} \left( \frac{\sum_{f \in \text{Top-k}(\mathcal{P})} \mathcal{P}(f)}{\sum_{f} \mathcal{P}(f)} \right)
\end{equation}
In our implementation, we set $k=3$.
A high $S_{sea}$ implies that the residuals contain significant periodic components that the current experts are missing.

\textbf{Fluctuation Score ($S_{fluc}$).}
To identify high-frequency volatility or noise-like drifts, we calculate the ratio of energy present in the high-frequency band:
\begin{equation}
    S_{fluc} = \text{Avg} \left( \frac{\sum_{f > f_{Nyq}/2} \mathcal{P}(f)}{\sum_{f} \mathcal{P}(f)} \right)
\end{equation}
where $f_{Nyq}$ is the Nyquist frequency.
This score captures rapid, local variations.

Finally, we compare the normalized scores ($S_{trend}, S_{sea}, S_{fluc}$) to identify the most significant component of the distribution shift.
This ensures that the instantiated expert addresses the most fundamental aspect of the non-stationarity.

\begin{table*}[h]
\centering
\caption{
    Detailed hyperparameter settings for each dataset and prediction horizon ($T$).
    The parameters include Patch Length ($P$), Stride ($S$), Initial Learning Rate (LR), Dropout, number of MoE layers, number of RNN router layers, and Batch Size.
}
\label{tab:experiment_detail}
\vspace{0.1in}
\renewcommand{\arraystretch}{1.2}
\resizebox{\textwidth}{!}{
\begin{tabular}{l|c|cc|cc|cc|c}
\hline
\multirow{2}{*}{Dataset} & \multirow{2}{*}{Horizon ($T$)} & \multicolumn{2}{c|}{Patching} & \multicolumn{2}{c|}{Training} & \multicolumn{2}{c|}{Architecture} & \multirow{2}{*}{Batch Size} \\
\cmidrule(lr){3-4}
\cmidrule(lr){5-6}
\cmidrule(lr){7-8}
& & $P$ & $S$ & LR & Dropout & MoE Layers & Router RNN Layers & \\
\hline
\multirow{4}{*}{ETTh1} 
 & 96  & 48 & 12 & $1.2 \times 10^{-3}$ & 0.4 & 2 & 1 & 256 \\
 & 192 & 48 & 12 & $8 \times 10^{-4}$ & 0.6 & 2 & 2 & 256 \\
 & 336 & 48 & 12 & $4 \times 10^{-4}$ & 0.3 & 1 & 2 & 256 \\
 & 720 & 48 & 12 & $4 \times 10^{-4}$ & 0.8 & 1 & 1 & 256 \\
\hline
\multirow{4}{*}{ETTh2} 
 & 96  & 24 & 6 & $8 \times 10^{-4}$ & 0.8 & 2 & 1 & 256 \\
 & 192 & 24 & 6 & $4 \times 10^{-4}$ & 0.6 & 2 & 1 & 256 \\
 & 336 & 24 & 6 & $4 \times 10^{-4}$ & 0.6 & 2 & 1 & 256 \\
 & 720 & 24 & 6 & $4 \times 10^{-4}$ & 0.8 & 2 & 1 & 256 \\
\hline
\multirow{4}{*}{ETTm1} 
 & 96  & 48 & 12 & $4 \times 10^{-4}$ & 0.4 & 2 & 2 & 256 \\
 & 192 & 48 & 12 & $4 \times 10^{-4}$ & 0.4 & 2 & 2 & 256 \\
 & 336 & 24 & 12 & $1.6 \times 10^{-3}$ & 0.6 & 2 & 2 & 256 \\
 & 720 & 48 & 12 & $4 \times 10^{-4}$ & 0.4 & 2 & 2 & 256 \\
\hline
\multirow{4}{*}{ETTm2} 
 & 96  & 48 & 6 & $8 \times 10^{-4}$ & 0.8 & 2 & 1 & 256 \\
 & 192 & 48 & 6 & $8 \times 10^{-4}$ & 0.8 & 1 & 2 & 256 \\
 & 336 & 48 & 6 & $4 \times 10^{-4}$ & 0.8 & 1 & 1 & 256 \\
 & 720 & 24 & 12 & $1.2 \times 10^{-3}$ & 0.6 & 1 & 2 & 256 \\
\hline
\multirow{4}{*}{Traffic} 
 & 96  & 48 & 6 & $2.4 \times 10^{-3}$ & 0.1 & 2 & 2 & 32 \\
 & 192 & 48 & 6 & $1.2 \times 10^{-3}$ & 0.1 & 2 & 2 & 32 \\
 & 336 & 48 & 6 & $1.8 \times 10^{-3}$ & 0.1 & 2 & 1 & 32 \\
 & 720 & 48 & 6 & $2.4 \times 10^{-3}$ & 0.1 & 2 & 1 & 32 \\
\hline
\multirow{4}{*}{Electricity} 
 & 96  & 24 & 6 & $8 \times 10^{-4}$ & 0.1 & 2 & 2 & 32 \\
 & 192 & 24 & 6 & $8 \times 10^{-4}$ & 0.3 & 2 & 2 & 32 \\
 & 336 & 24 & 6 & $1.2 \times 10^{-3}$ & 0.3 & 2 & 1 & 32 \\
 & 720 & 24 & 6 & $8 \times 10^{-4}$ & 0.3 & 2 & 2 & 32 \\
\hline
\multirow{4}{*}{Weather} 
 & 96  & 96 & 24 & $1.2 \times 10^{-3}$ & 0.1 & 1 & 2 & 256 \\
 & 192 & 96 & 24 & $4 \times 10^{-4}$ & 0.1 & 1 & 1 & 256 \\
 & 336 & 96 & 48 & $6 \times 10^{-4}$ & 0.1 & 2 & 1 & 256 \\
 & 720 & 48 & 48 & $1.2 \times 10^{-3}$ & 0.1 & 1 & 1 & 256 \\
\hline
\multirow{4}{*}{ILI} 
 & 24 & 24 & 4 & $8 \times 10^{-4}$ & 0.1 & 2 & 1 & 64 \\
 & 36 & 24 & 2 & $4 \times 10^{-4}$ & 0.1 & 2 & 1 & 64 \\
 & 48 & 24 & 4 & $8 \times 10^{-4}$ & 0.3 & 2 & 1 & 64 \\
 & 60 & 24 & 2 & $4 \times 10^{-4}$ & 0.3 & 1 & 1 & 64 \\
\hline
\multirow{4}{*}{Exchange} 
 & 96  & 16 & 8 & $4 \times 10^{-4}$ & 0.3 & 2 & 1 & 128 \\
 & 192 & 16 & 8 & $4 \times 10^{-4}$ & 0.3 & 1 & 2 & 128 \\
 & 336 & 32 & 8 & $8 \times 10^{-4}$ & 0.3 & 1 & 2 & 128 \\
 & 720 & 16 & 8 & $4 \times 10^{-4}$ & 0.5 & 2 & 2 & 128 \\
\hline
\end{tabular}
}
\end{table*}

\section{Theoretical Analysis}
\label{appendix:theoretical analysis}
In this section, we provide a rigorous theoretical justification for the Distribution Shift Detector.
We establish a generalization error bound based on the Maximum Mean Discrepancy (MMD), demonstrating that controlling the MMD distance between historical and current windows effectively bounds the potential risk on the target distribution.

\subsection{Preliminaries and Definitions}
Let $\mathcal{X}$ denote the input space and $\mathcal{Y}$ denote the output space.
We consider a domain to be defined by a joint distribution $\mathcal{D}$ over $\mathcal{X} \times \mathcal{Y}$.

\begin{itemize}
    \item \textbf{Source Domain (Reference Window)}:
    Let $\mathcal{D}_S$ denote the joint distribution of the reference window $W_{\text{ref}}$.
    We denote its marginal distribution over $\mathcal{X}$ as $P_S$.
    \item \textbf{Target Domain (Current Window)}:
    Let $\mathcal{D}_T$ denote the joint distribution of the current window $W_{\text{cur}}$.
    We denote its marginal distribution over $\mathcal{X}$ as $P_T$.
\end{itemize}

Let $h: \mathcal{X} \to \mathcal{Y}$ be a hypothesis from a hypothesis class $\mathcal{H}$.
We define the expected risk of $h$ on distribution $\mathcal{D}$ as:
\begin{equation}
    \epsilon_{\mathcal{D}}(h) = \mathbb{E}_{(x,y) \sim \mathcal{D}}[\ell(h(x), y)]
\end{equation}
where $\ell: \mathcal{Y} \times \mathcal{Y} \to \mathbb{R}_{\geq 0}$ is a non-negative loss function.

\begin{definition}[\textbf{Maximum Mean Discrepancy}]
\label{definition:Maximum Mean Discrepancy}
Let $k: \mathcal{X} \times \mathcal{X} \to \mathbb{R}$ be a characteristic kernel with associated Reproducing Kernel Hilbert Space (RKHS) $\mathcal{H}_K$ and feature map $\phi: \mathcal{X} \to \mathcal{H}_K$ such that $k(x, x') = \langle \phi(x), \phi(x') \rangle_{\mathcal{H}_K}$.
The MMD between two probability distributions $P$ and $Q$ over $\mathcal{X}$ is defined as:
\begin{equation}
    \mathrm{MMD}(P, Q) = \left\| \mathbb{E}_{x \sim P}[\phi(x)] - \mathbb{E}_{x \sim Q}[\phi(x)] \right\|_{\mathcal{H}_K}
\end{equation}
Equivalently, this can be expressed as:
\begin{equation}
    \mathrm{MMD}(P, Q) = \sup_{f \in \mathcal{H}_K, \|f\|_{\mathcal{H}_K} \leq 1} \left| \mathbb{E}_{x \sim P}[f(x)] - \mathbb{E}_{x \sim Q}[f(x)] \right|
\end{equation}
\end{definition}

\subsection{Assumptions}
To establish our generalization bound, we introduce the following assumptions.
\begin{assumption}[\textbf{Covariate Shift}]
\label{asp:covariate}
The conditional distribution of $Y$ given $X$ remains invariant across the source and target domains:
\begin{equation}
    P_S(Y|X) = P_T(Y|X) =: P(Y|X)
\end{equation}
\end{assumption}
This assumption states that while the marginal distribution of inputs may shift over time, the underlying functional relationship between inputs and outputs remains stable.
This is a standard assumption in domain adaptation literature~\cite{Pan2010TransferLearning} and is appropriate for time series forecasting~\cite{Du2021AdaRNN}.

\begin{assumption}[\textbf{Bounded RKHS Norm}]
\label{asp:rkhs}
There exists a constant $L > 0$ such that for any hypothesis $h \in \mathcal{H}$, the conditional expected loss function
\begin{equation}
    g_h(x) := \mathbb{E}_{y \sim P(Y|X=x)}[\ell(h(x), y)]
\end{equation}
belongs to the RKHS $\mathcal{H}_K$ and satisfies:
\begin{equation}
    \|g_h\|_{\mathcal{H}_K} \leq L
\end{equation}
\end{assumption}

\subsection{Finite Sample Analysis}
In practice, we calculate the discrepancy using finite samples from the source and target distributions.
Therefore, before analyzing the risk, we must first establish that this empirical calculation is a reliable proxy for the true distribution distance.
The following theory establishes the concentration of the empirical MMD estimator.

\begin{theorem}[\textbf{Empirical MMD Concentration}]
\label{thm:concentration}
Let $\{x_i\}_{i=1}^{N_s} \stackrel{\mathrm{i.i.d.}}{\sim} P_S$ and $\{x_j'\}_{j=1}^{N_t} \stackrel{\mathrm{i.i.d.}}{\sim} P_T$ be independent samples. Assume the kernel $k$ is bounded: $k(x, x) \leq K$ for all $x \in \mathcal{X}$. Then, with probability at least $1 - \delta$:
\begin{equation}
    \left| \widehat{\mathrm{MMD}} - \mathrm{MMD}(P_S, P_T) \right| \leq 2\sqrt{K}\left( \frac{1}{\sqrt{N_s}} + \frac{1}{\sqrt{N_t}} \right) + \sqrt{\frac{2K \log(2/\delta)}{\min(N_s, N_t)}}
\end{equation}
where $\widehat{\mathrm{MMD}}$ denotes the unbiased empirical estimator of MMD.
\end{theorem}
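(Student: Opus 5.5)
The plan follows the classical argument of Gretton et al.\ for the concentration of the empirical kernel mean embedding: a triangle inequality in $\mathcal{H}_K$, a variance computation for the expectation, and McDiarmid's inequality for the fluctuation. Write $\mu_S = \mathbb{E}_{x\sim P_S}[\phi(x)]$ and $\mu_T = \mathbb{E}_{x\sim P_T}[\phi(x)]$. Let $\hat\mu_S = \frac{1}{N_s}\sum_i \phi(x_i)$ and $\hat\mu_T = \frac{1}{N_t}\sum_j \phi(x_j')$, so that by Definition~\ref{definition:Maximum Mean Discrepancy} $\mathrm{MMD}(P_S,P_T)=\|\mu_S-\mu_T\|_{\mathcal{H}_K}$.

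I will first prove the bound for the plug-in estimator $\|\hat\mu_S-\hat\mu_T\|_{\mathcal{H}_K}$. Its square is exactly the V-statistic in Eq.~\eqref{eq:mmd}. I will then argue separately that replacing it by the unbiased (U-statistic) version costs at most a term absorbed by the right-hand side. The reverse triangle inequality in $\mathcal{H}_K$ gives the deterministic reduction
\begin{equation*}
\bigl|\|\hat\mu_S-\hat\mu_T\| - \|\mu_S-\mu_T\|\bigr| \le \|\hat\mu_S-\mu_S\| + \|\hat\mu_T-\mu_T\| =: F .
\end{equation*}
It therefore suffices to control $F$, which is a function of the $N_s+N_t$ independent samples.

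The first step bounds $\mathbb{E}F$. By Jensen and independence,
\begin{equation*}
\mathbb{E}\|\hat\mu_S-\mu_S\| \le \bigl(\mathbb{E}\|\hat\mu_S-\mu_S\|^2\bigr)^{1/2} = \Bigl(\tfrac{1}{N_s}\bigl(\mathbb{E}k(x,x)-\|\mu_S\|^2\bigr)\Bigr)^{1/2} \le \sqrt{K/N_s},
\end{equation*}
and the analogous bound holds for $T$. This already yields $\mathbb{E}F \le \sqrt{K}\,(N_s^{-1/2}+N_t^{-1/2})$, which is within the stated $2\sqrt{K}(\cdot)$. The extra factor $2$ is the one produced by a symmetrization/Rademacher argument, so either route suffices.

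The second step is concentration of $F$ around its mean. Replacing a single $x_i$ changes $\hat\mu_S$ by $(\phi(x_i)-\phi(\tilde x_i))/N_s$, whose norm is at most $2\sqrt{K}/N_s$ since $\|\phi(x)\|^2=k(x,x)\le K$. Hence $F$ has bounded differences $c_i=2\sqrt K/N_s$ and $c_j=2\sqrt K/N_t$. McDiarmid's inequality then gives, with probability at least $1-\delta$,
\begin{equation*}
F-\mathbb{E}F \le \sqrt{\tfrac{1}{2}\textstyle\sum c^2 \log(1/\delta)} = \sqrt{2K\bigl(\tfrac{1}{N_s}+\tfrac{1}{N_t}\bigr)\log(1/\delta)}.
\end{equation*}
Combining this with the first step and the reduction gives the two-sided statement. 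The $\log(2/\delta)$ arises from applying McDiarmid directly to $\|\hat\mu_S-\hat\mu_T\|$ in both tails, each at level $\delta/2$; that function has the same bounded differences.

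The main obstacle is matching the constants and the estimator in the statement. Bounding $\frac1{N_s}+\frac1{N_t}\le \frac{2}{\min(N_s,N_t)}$ produces $\sqrt{4K\log(2/\delta)/\min}$ rather than $\sqrt{2K\log(2/\delta)/\min}$. I would try to recover the sharper constant by exploiting the slack in the expectation term: I get $\sqrt K$ where the statement allows $2\sqrt K$. If needed, I would also present the equal-sample-size case, which is the setting of Gretton et al. If that still fails, I would state the bound with the constant that actually results.

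For the unbiased estimator, I would note that the U- and V-statistics of $\mathrm{MMD}^2$ differ only through the diagonal terms, by at most $2K(1/N_s+1/N_t)$. I would take $\widehat{\mathrm{MMD}}=\sqrt{\max(0,\widehat{\mathrm{MMD}}^2_u)}$. The inequality $|\sqrt a-\sqrt b|\le\sqrt{|a-b|}$ then shows that this discrepancy contributes at most $\sqrt{2K(1/N_s+1/N_t)}$, an $O(\sqrt{K/\min(N_s,N_t)})$ term that can be absorbed into the first summand.
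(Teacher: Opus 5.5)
Your outline (reverse triangle inequality in $\mathcal{H}_K$, Jensen for $\mathbb{E}F$, McDiarmid for $F-\mathbb{E}F$) is the standard argument behind the paper's one-line sketch, and the estimates you derive are correct. The gap is the constant you flag yourself, and neither repair you propose closes it.

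With bounded differences $2\sqrt{K}/N_s$ and $2\sqrt{K}/N_t$ you get $\sqrt{2K(N_s^{-1}+N_t^{-1})\log(1/\delta)}$. For $N_s=N_t$ this is asymptotically $\sqrt{2}$ times the stated term as $\delta\to 0$, so the excess grows like $\sqrt{\log(1/\delta)}$. The slack $\sqrt{K}(N_s^{-1/2}+N_t^{-1/2})$ left by your Jensen bound does not depend on $\delta$, so it cannot absorb that excess for small $\delta$.

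You also plan to spend the same slack on the U/V correction, and it does not cover even that by itself. Indeed $\sqrt{2K(N_s^{-1}+N_t^{-1})}\ge\sqrt{K}(N_s^{-1/2}+N_t^{-1/2})$, with equality only when $N_s=N_t$. So ``absorbed into the first summand'' is false for unequal window sizes.

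The citation does not close the gap either. The result of Gretton et al.\ that the sketch points to is for the biased estimator under $0\le k\le K$, and it carries essentially your term $\sqrt{2K(N_s^{-1}+N_t^{-1})\log(2/\delta)}$.

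The missing ingredient is nonnegativity of the kernel, which the paper's RBF kernel satisfies (with $K=1$). If $0\le k\le K$, then $\|\phi(x)-\phi(\tilde x)\|^2=k(x,x)+k(\tilde x,\tilde x)-2k(x,\tilde x)\le 2K$. So $F$ has bounded differences $\sqrt{2K}/N_s$ and $\sqrt{2K}/N_t$. A one-sided McDiarmid bound on $F$ is enough, since $F$ dominates the two-sided error, and it gives $F\le\mathbb{E}F+\sqrt{K(N_s^{-1}+N_t^{-1})\log(1/\delta)}\le\mathbb{E}F+\sqrt{2K\log(2/\delta)/\min(N_s,N_t)}$.

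The U/V correction tightens the same way. Write $V$ and $U$ for the V- and U-statistics of $\mathrm{MMD}^2$. The $x$-part of $V-U$ is $N_s^{-1}$ times the difference between the diagonal and off-diagonal averages of kernel values, both lying in $[0,K]$. Hence $|V-U|\le K(N_s^{-1}+N_t^{-1})$, so $\sqrt{|V-U|}\le\sqrt{K}(N_s^{-1/2}+N_t^{-1/2})$, which fits exactly into the Jensen slack. Summing the three pieces yields the statement for $\sqrt{\max(0,U)}$.

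Without $k\ge 0$ no sharpening can work, because the statement is false under $k(x,x)\le K$ alone. On $\mathcal{X}=\{-1,1\}$ take $k(x,y)=Kxy$; it is bounded, and characteristic there since the embedding $\sqrt{K}\,(2P(\{1\})-1)$ determines $P$. Take $P_S=P_T$ uniform and $N_s=N_t=n$. The plug-in estimator is $\sqrt{K}\,|\bar x-\bar x'|$, with $\bar x,\bar x'$ the sample means and $\sqrt{n}(\bar x-\bar x')\to N(0,2)$. So for $Z\sim N(0,1)$ the probability of violating the bound tends to $\Pr\bigl(|Z|>2\sqrt{2}+\sqrt{\log(2/\delta)}\bigr)=\exp\bigl(-\tfrac12\log(2/\delta)-O(\sqrt{\log(2/\delta)})\bigr)$. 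This exceeds $\delta$ once $\delta$ is small enough. The unbiased version differs by at most $\sqrt{2K/(n-1)}$, which only shifts the threshold by a constant.

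So add $k\ge 0$ to the hypotheses, rather than hunting for a sharper constant or weakening the conclusion.
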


\textbf{Proof Sketch.}
This result follows from standard concentration inequalities for U-statistics. The empirical MMD is a two-sample U-statistic, and its concentration around the population MMD can be established using McDiarmid's inequality combined with bounds on the variance of U-statistics~\cite{Gretton2012Kernel}.

This theorem provides theoretical support for the reliability of the Perception phase of our framework.
It proves that the MMD score calculated by our module converges to the true population discrepancy as the window sizes $N_s$ and $N_t$ allow.
This justifies our design choice of using sliding windows to monitor non-stationarity.
Despite being a sampling-based approximation, the calculated score $\mathcal{D}_{mmd}^2$ effectively captures meaningful distributional shifts in the time series.

\subsection{Generalization Bound}
\begin{theorem}[\textbf{Generalization Bound via MMD}]
\label{thm:Generalization Bound via MMD}
Under Assumptions~\ref{asp:covariate} and ~\ref{asp:rkhs}, for any hypothesis  $h \in \mathcal{H}$, the expected risk on the target domain is bounded by:
\begin{equation}
    \epsilon_{\mathcal{D}_T}(h) \leq \epsilon_{\mathcal{D}_S}(h) + L \cdot \mathrm{MMD}(P_S, P_T)
\end{equation}
\end{theorem}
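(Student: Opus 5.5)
The plan is to write both risks as integrals of the conditional expected loss $g_h$ against the two input marginals, and then control their difference through the variational form of the MMD in Definition~\ref{definition:Maximum Mean Discrepancy}.

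First, I will use Assumption~\ref{asp:covariate} together with the tower property to rewrite each risk. Since $P_S(Y|X)=P_T(Y|X)=P(Y|X)$, we get
\begin{equation*}
\epsilon_{\mathcal{D}_S}(h)=\mathbb{E}_{x\sim P_S}[g_h(x)], \qquad \epsilon_{\mathcal{D}_T}(h)=\mathbb{E}_{x\sim P_T}[g_h(x)],
\end{equation*}
where $g_h$ is the function from Assumption~\ref{asp:rkhs}. This step needs the loss expectations to be well defined, which holds because $\ell\ge 0$, and it needs $g_h$ to be measurable.

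Second, I will bound the difference $\epsilon_{\mathcal{D}_T}(h)-\epsilon_{\mathcal{D}_S}(h)=\mathbb{E}_{P_T}[g_h]-\mathbb{E}_{P_S}[g_h]$. If $g_h=0$ the bound is trivial. Otherwise, set $f=g_h/\|g_h\|_{\mathcal{H}_K}$, which is feasible in the supremum defining $\mathrm{MMD}$. Then
\begin{equation*}
\bigl|\mathbb{E}_{P_T}[g_h]-\mathbb{E}_{P_S}[g_h]\bigr| = \|g_h\|_{\mathcal{H}_K}\,\bigl|\mathbb{E}_{P_T}[f]-\mathbb{E}_{P_S}[f]\bigr| \le \|g_h\|_{\mathcal{H}_K}\,\mathrm{MMD}(P_S,P_T)\le L\,\mathrm{MMD}(P_S,P_T).
\end{equation*}
An equivalent route is the reproducing property: $\mathbb{E}_P[g_h]=\langle g_h,\mu_P\rangle_{\mathcal{H}_K}$ with mean embedding $\mu_P=\mathbb{E}_P[\phi(x)]$, followed by Cauchy--Schwarz. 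Rearranging the one-sided inequality gives the claimed bound.

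The main obstacle is justifying the interchange of expectation and inner product, i.e.\ that the mean embeddings exist (Bochner integrability of $\phi$). I plan to handle this by invoking boundedness of the kernel, $k(x,x)\le K$, as in Theorem~\ref{thm:concentration}; the RBF kernel used in Section~\ref{sec:distribution shift detector} satisfies this with $K=1$. Alternatively, I can avoid the issue by using the supremum form of the MMD directly, which only needs $\mathbb{E}_P[f]$ to be finite for unit-norm $f$, and that follows from $|f(x)|\le\|f\|\sqrt{k(x,x)}$. Everything else is routine.
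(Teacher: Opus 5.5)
Your proposal is correct and follows the paper's proof essentially step for step: you rewrite both risks as expectations of $g_h$ under $P_S$ and $P_T$ via the covariate-shift assumption, rescale $g_h$ into the unit ball of $\mathcal{H}_K$, and apply the supremum form of the MMD. The differences are cosmetic. The paper divides by $L$ rather than by $\|g_h\|_{\mathcal{H}_K}$, which avoids your separate $g_h=0$ case, and you additionally flag the integrability (bounded-kernel) condition for the MMD to be well defined, which the paper leaves implicit.
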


\textbf{Proof.} We proceed in three steps.

\textbf{Step 1: Decomposition under Covariate Shift.}
Under Assumption ~\ref{asp:covariate}, the expected risk on any domain $\mathcal{D}$ with marginal $P$ can be written as:
\begin{equation}
    \epsilon_{\mathcal{D}}(h) = \mathbb{E}_{x \sim P} \left[ \mathbb{E}_{y \sim P(Y|X=x)}[\ell(h(x), y)] \right]
\end{equation}

Define the conditional expected loss function $g_h: \mathcal{X} \to \mathbb{R}$ as:
\begin{equation}
    g_h(x) := \mathbb{E}_{y \sim P(Y|X=x)}[\ell(h(x), y)]
\end{equation}

Since $P(Y|X)$ is shared between domains (Assumption~\ref{asp:covariate}), the function $g_h$ is identical for both source and target.
Thus:
\begin{equation}
    \epsilon_{\mathcal{D}_S}(h) = \mathbb{E}_{x \sim P_S}[g_h(x)], \quad \epsilon_{\mathcal{D}_T}(h) = \mathbb{E}_{x \sim P_T}[g_h(x)]
\end{equation}

\textbf{Step 2: Normalization.}
By Assumption~\ref{asp:rkhs}, we have $g_h \in \mathcal{H}_K$ with $\|g_h\|_{\mathcal{H}_K} \leq L$. Define the normalized function:
\begin{equation}
    \tilde{g}_h := \frac{g_h}{L}
\end{equation}

Then $\tilde{g}_h \in \mathcal{H}_K$ and satisfies:
\begin{equation}
    \|\tilde{g}_h\|_{\mathcal{H}_K} = \frac{\|g_h\|_{\mathcal{H}_K}}{L} \leq 1
\end{equation}

Thus, $\tilde{g}_h$ lies in the unit ball of the RKHS $\mathcal{H}_K$.

\textbf{Step 3: Applying the MMD Bound.}
By the dual characterization of MMD (Definition~\ref{definition:Maximum Mean Discrepancy}), for any function $f \in \mathcal{H}_K$ with $\|f\|_{\mathcal{H}_K} \leq 1$:
\begin{equation}
    \left| \mathbb{E}_{x \sim P_T}[f(x)] - \mathbb{E}_{x \sim P_S}[f(x)] \right| \leq \mathrm{MMD}(P_S, P_T)
\end{equation}

Since $\tilde{g}_h$ satisfies $\|\tilde{g}_h\|_{\mathcal{H}_K} \leq 1$, we can apply this inequality to obtain:
\begin{equation}
    \left| \mathbb{E}_{x \sim P_T}[\tilde{g}_h(x)] - \mathbb{E}_{x \sim P_S}[\tilde{g}_h(x)] \right| \leq \mathrm{MMD}(P_S, P_T)
\end{equation}

Substituting $\tilde{g}_h = g_h / L$ and multiplying both sides by $L$:
\begin{equation}
    \left| \mathbb{E}_{x \sim P_T}[g_h(x)] - \mathbb{E}_{x \sim P_S}[g_h(x)] \right| \leq L \cdot \mathrm{MMD}(P_S, P_T)
\end{equation}

Using the results from Step 1:
\begin{equation}
    \left| \epsilon_{\mathcal{D}_T}(h) - \epsilon_{\mathcal{D}_S}(h) \right| \leq L \cdot \mathrm{MMD}(P_S, P_T)
\end{equation}

Since we seek an upper bound on $\epsilon_{\mathcal{D}_T}(h)$, we have:
\begin{equation}
    \epsilon_{\mathcal{D}_T}(h) \leq \epsilon_{\mathcal{D}_S}(h) + L \cdot \mathrm{MMD}(P_S, P_T)
\end{equation}

This completes the proof.

Theorem~\ref{thm:Generalization Bound via MMD} provides the theoretical foundation for our Distribution Shift Detector.
The bound reveals that the expected risk on the target distribution is controlled by two terms: the risk on the source distribution $\epsilon_{\mathcal{D}_S}(h)$, which reflects how well the current experts perform on historical data, and the distribution discrepancy $L \cdot \mathrm{MMD}(P_S, P_T)$, which quantifies the shift between distributions.

When the MMD distance between the reference window and the current window increases, the upper bound on the target risk grows proportionally.
This implies that experts trained on historical data may no longer guarantee low risk on the shifted distribution.
The adaptive threshold $\epsilon = \mu_H + \lambda \cdot \sigma_H$ in our detector provides a practical criterion: when the observed MMD exceeds this threshold, it signals that the distribution has shifted significantly beyond historical fluctuations, and the current expert pool may be inadequate for the new data regime.
This motivates the instantiation of new experts specifically adapted to the drifted distribution.

\section{Full Results}
In this section, we provide the complete performance comparison of Dynamic TMoE against state-of-the-art baselines on all nine datasets.
We compare our model with nine widely recognized SOTA baselines, including TFPS~\cite{Sun2025TFPS}, ST-MTM~\cite{Seo2025STMTM}, RAFT~\cite{Han2025RAFT}, TimeMixer~\cite{Wang2024Timemixer}, FITS~\cite{Xu2024FITS}, PatchTST~\cite{Nie2023PatchTST}, TimesNet~\cite{Wu2023Timesnet}, DLinear~\cite{Zeng2023Dlinear}, and FEDformer~\cite{Zhou2022FEDformer}.
Table~\ref{tab:main_results_full} reports the comprehensive multivariate forecasting results.

As shown in Table~\ref{tab:main_results_full}, our proposed model consistently maintains a significant performance advantage in almost all prediction horizons.
For instance, on the Weather and ETTm1/m2 datasets, our model achieved the best or second-best performance across all prediction horizons.
A granular comparison with TFPS, the leading MoE-based baseline, reveals that Dynamic TMoE's advantage is systemic rather than situational.
Our model outperforms TFPS in the vast majority of individual horizon settings across $8$ out of $9$ datasets.
This sustained advantage strongly demonstrates that incorporating dynamic evolution during training is more effective than static clustering for handling complex non-stationary patterns.
\begin{table}[h]
\centering
\caption{
    Full multivariate time series forecasting results.
    This table reports the detailed MSE and MAE values for each specific prediction horizon.
    The horizons are set to $\{24, 36, 48, 60\}$ for the ILI dataset and $\{96, 192, 336, 720\}$ for the others.
    The look-back window length is fixed at 36 for ILI and 96 for the remaining datasets.
    The best results are in \best{bold} and the second best are \second{underlined}.
}
\label{tab:main_results_full}
\renewcommand{\arraystretch}{1.25}
\resizebox{\textwidth}{!}{
\begin{tabular}{c|c|cc|cc|cc|cc|cc|cc|cc|cc|cc|cc}
\hline
\multirow{3}{*}{Models} & \multirow{3}{*}{Horizon} & \multicolumn{2}{c|}{\textbf{Dynamic TMoE}} & \multicolumn{2}{c|}{TFPS} & \multicolumn{2}{c|}{ST-MTM} & \multicolumn{2}{c|}{RAFT} & \multicolumn{2}{c|}{TimeMixer} & \multicolumn{2}{c|}{FITS} & \multicolumn{2}{c|}{PatchTST} & \multicolumn{2}{c|}{TimesNet} & \multicolumn{2}{c|}{DLinear} & \multicolumn{2}{c}{FEDformer} \\
& & \multicolumn{2}{c|}{(Ours)} & \multicolumn{2}{c|}{(2025)} & \multicolumn{2}{c|}{(2025)} & \multicolumn{2}{c|}{(2025)} & \multicolumn{2}{c|}{(2024)} & \multicolumn{2}{c|}{(2024)} & \multicolumn{2}{c|}{(2023)} & \multicolumn{2}{c|}{(2023)} & \multicolumn{2}{c|}{(2023)} & \multicolumn{2}{c}{(2022)} \\
\cmidrule(lr){3-4} \cmidrule(lr){5-6} \cmidrule(lr){7-8} \cmidrule(lr){9-10} \cmidrule(lr){11-12} \cmidrule(lr){13-14} \cmidrule(lr){15-16} \cmidrule(lr){17-18} \cmidrule(lr){19-20} \cmidrule(lr){21-22}
& & MSE & MAE & MSE & MAE & MSE & MAE & MSE & MAE & MSE & MAE & MSE & MAE & MSE & MAE & MSE & MAE & MSE & MAE & MSE & MAE \\
\hline
\multirow{5}{*}{Weather} 
& 96 & 
\best{0.154} & \best{0.201} & 
\best{0.154} & \second{0.202} & 
0.181 & 0.229 & 
0.189 & 0.246 & 
\second{0.163} & 0.209 & 
0.165 & 0.212 & 
0.178 & 0.219 & 
0.172 & 0.220 & 
0.197 & 0.255 & 
0.217 & 0.296 \\
& 192 & 
\best{0.202} & \best{0.246} & 
\second{0.205} & \second{0.249} & 
0.227 & 0.270 & 
0.239 & 0.289 & 
0.208 & 0.250 & 
0.214 & 0.256 & 
0.224 & 0.259 & 
0.219 & 0.261 & 
0.239 & 0.297 & 
0.276 & 0.336 \\
& 336 & 
\second{0.260} & \second{0.289} & 
0.262 & \second{0.289} & 
0.281 & 0.310 & 
0.291 & 0.329 & 
\best{0.251} & \best{0.287} & 
0.267 & 0.293 & 
0.278 & 0.298 & 
0.280 & 0.306 & 
0.283 & 0.332 & 
0.339 & 0.380 \\
& 720 & 
\second{0.343} & 0.344 & 
0.344 & \second{0.342} & 
0.359 & 0.364 & 
0.367 & 0.380 & 
\best{0.339} & \best{0.341} & 
0.348 & 0.345 & 
0.350 & 0.346 & 
0.365 & 0.359 & 
0.348 & 0.385 & 
0.403 & 0.428 \\
\cline{2-22}
& Avg & 
\best{0.240} & \best{0.270} &
\second{0.241} & \second{0.271} &
0.262 & 0.293 &
0.271 & 0.311 &
\best{0.240} & 0.272 &
0.249 & 0.277 &
0.258 & 0.281 &
0.259 & 0.287 &
0.267 & 0.317 &
0.309 & 0.360 \\
\hline
\multirow{5}{*}{Exchange}
& 96 & 
\best{0.080} & \best{0.197} & 
\second{0.083} & 0.205 & 
0.099 & 0.222 & 
0.084 & \second{0.200} & 
0.092 & 0.211 & 
0.084 & 0.204 & 
0.084 & \second{0.200} & 
0.107 & 0.234 & 
0.088 & 0.216 & 
0.148 & 0.278 \\
& 192 & 
\best{0.170} & \best{0.293} & 
0.174 & 0.297 & 
0.195 & 0.315 & 
0.177 & \second{0.296} & 
0.195 & 0.312 & 
0.177 & 0.301 & 
0.178 & 0.299 & 
0.226 & 0.344 & 
\second{0.173} & 0.305 & 
0.271 & 0.380 \\
& 336 & 
0.325 & 0.412 & 
\second{0.310} & \best{0.398} & 
0.354 & 0.433 & 
0.353 & 0.423 & 
0.346 & 0.424 & 
0.320 & \second{0.409} & 
0.365 & 0.431 & 
0.367 & 0.448 & 
\best{0.307} & 0.419 
& 0.460 & 0.500 \\
& 720 & 
\second{0.828} & \second{0.686} & 
1.011 & 0.756 & 
0.983 & 0.756 & 
1.014 & 0.753 & 
0.862 & 0.695 & 
0.832 & 0.688 & 
0.914 & 0.714 & 
0.964 & 0.746 & 
\best{0.782} & \best{0.669} & 
1.195 & 0.841 \\
\cline{2-22}
& Avg & 
\second{0.351} & \best{0.397} &
0.395 & 0.414 &
0.408 & 0.432 &
0.407 & 0.418 &
0.374 & 0.411 &
0.353 & \second{0.400} &
0.385 & 0.411 &
0.416 & 0.443 &
\best{0.337} & 0.402 &
0.519 & 0.500 \\
\hline
\multirow{5}{*}{ETTh1}
& 96 & 
\best{0.369} & \best{0.392} & 
0.398 & 0.413 & 
0.376 & 0.396 & 
0.387 & 0.414 & 
\second{0.375} & 0.400 & 
0.385 & \second{0.393} & 
0.393 & 0.408 & 
0.384 & 0.402 & 
0.383 & 0.396 & 
0.376 & 0.419 \\
& 192 & 
\best{0.417} & \best{0.420} & 
0.423 & 0.423 & 
0.422 & 0.423 & 
0.423 & 0.431 & 
0.429 & \second{0.421} & 
0.435 & 0.422 & 
0.445 & 0.434 & 
0.436 & 0.429 & 
0.433 & 0.426 & 
\second{0.420} & 0.448 \\
& 336 & 
0.459 & 0.445 & 
0.484 & 0.461 & 
\second{0.458} & \best{0.443} & 
\best{0.456} & 0.445 & 
0.484 & 0.458 & 
0.475 & \second{0.444} & 
0.484 & 0.451 & 
0.491 & 0.469 & 
0.491 & 0.467 & 
0.459 & 0.465 \\
& 720 & 
\second{0.471} & \second{0.463} & 
0.488 & 0.476 & 
0.472 & 0.469 & 
\best{0.463} & 0.464 & 
0.498 & 0.482 & 
\best{0.463} & \best{0.459} & 
0.480 & 0.471 & 
0.521 & 0.500 & 
0.528 & 0.519 & 
0.506 & 0.507 \\
\cline{2-22}
& Avg & 
\best{0.429} & \second{0.430} &
0.448 & 0.443 &
\second{0.432} & 0.433 &
\second{0.432} & 0.438 &
0.447 & 0.440 &
0.439 & \best{0.429} &
0.451 & 0.441 &
0.458 & 0.450 &
0.459 & 0.452 &
0.440 & 0.460 \\
\hline
\multirow{5}{*}{ETTh2}
& 96 & 
\second{0.283} & \second{0.335} & 
0.313 & 0.355 & 
\best{0.274} & \best{0.333} & 
0.296 & 0.350 & 
0.289 & 0.341 & 
0.290 & 0.339 & 
0.294 & 0.343 & 
0.340 & 0.374 & 
0.329 & 0.380 & 
0.346 & 0.388 \\
& 192 & 
\best{0.358} & \second{0.388} & 
0.405 & 0.410 & 
\second{0.359} & \best{0.384} & 
0.384 & 0.403 & 
0.372 & 0.392 & 
0.377 & 0.391 & 
0.377 & 0.393 & 
0.402 & 0.414 & 
0.431 & 0.443 & 
0.429 & 0.439 \\
& 336 & 
0.408 & 0.427 & 
0.392 & 0.415 & 
0.394 & 0.415 & 
0.426 & 0.442 & 
\second{0.386} & \second{0.414} & 
0.416 & 0.425 & 
\best{0.381} & \best{0.409} & 
0.452 & 0.452 & 
0.459 & 0.462 & 
0.496 & 0.487 \\
& 720 & 
0.422 & 0.442 & 
\second{0.410} & \best{0.433} & 
\best{0.407} & \best{0.433} & 
0.434 & 0.460 & 
0.412 & \second{0.434} & 
0.417 & 0.436 & 
0.412 & \best{0.433} & 
0.462 & 0.468 & 
0.774 & 0.631 & 
0.463 & 0.474 \\
\cline{2-22}
& Avg & 
0.368 & 0.398 &
0.380 & 0.403 &
\best{0.359} & \best{0.391} &
0.385 & 0.413 &
\second{0.365} & \second{0.395} &
0.375 & 0.398 &
0.366 & 0.395 &
0.414 & 0.427 &
0.498 & 0.479 &
0.434 & 0.447 \\
\hline
\multirow{5}{*}{ETTm1}
& 96 & 
\best{0.311} & \best{0.352} & 
0.327 & 0.367 & 
0.340 & 0.373 & 
0.329 & 0.371 & 
\second{0.320} & \second{0.357} & 
0.354 & 0.375 & 
0.321 & 0.360 & 
0.338 & 0.375 & 
0.346 & 0.374 & 
0.379 & 0.419 \\
& 192 & 
\best{0.358} & \best{0.381} & 
0.374 & 0.395 & 
0.382 & 0.393 & 
0.363 & 0.388 & 
\second{0.361} & \best{0.381} & 
0.392 & 0.393 & 
0.362 & \second{0.384} & 
0.374 & 0.387 & 
0.383 & 0.393 & 
0.426 & 0.441 \\
& 336 & 
\best{0.388} & \best{0.402} & 
0.401 & 0.408 & 
0.413 & 0.413 & 
0.391 & 0.408 & 
\second{0.390} & \second{0.404} & 
0.425 & 0.414 & 
0.392 & \best{0.402} & 
0.410 & 0.411 & 
0.417 & 0.418 & 
0.445 & 0.459 \\
& 720 & 
\second{0.447} & 0.441 & 
0.479 & 0.456 & 
0.469 & 0.444 & 
\best{0.444} & \second{0.438} & 
0.454 & 0.441 & 
0.486 & 0.448 & 
0.450 & \best{0.435} & 
0.478 & 0.450 & 
0.479 & 0.457 & 
0.543 & 0.490 \\
\cline{2-22}
& Avg & 
\best{0.376} & \best{0.394} &
0.395 & 0.407 &
0.401 & 0.406 &
0.382 & 0.401 &
\second{0.381} & 0.396 &
0.414 & 0.408 &
\second{0.381} & \second{0.395} &
0.400 & 0.406 &
0.406 & 0.410 &
0.448 & 0.452 \\
\hline
\multirow{5}{*}{ETTm2}
& 96 & 
\best{0.168} & \best{0.253} & 
\second{0.170} & \second{0.255} & 
0.177 & 0.260 & 
0.177 & 0.266 & 
0.172 & 0.258 & 
0.183 & 0.266 & 
0.178 & 0.260 & 
0.187 & 0.267 & 
0.187 & 0.281 & 
0.203 & 0.287 \\
& 192 & 
\best{0.232} & \best{0.296} & 
\second{0.235} & \best{0.296} & 
0.242 & 0.302 & 
0.243 & 0.308 & 
0.237 & \second{0.299} & 
0.247 & 0.305 & 
0.249 & 0.307 & 
0.249 & 0.309 & 
0.272 & 0.349 & 
0.269 & 0.328 \\
& 336 & 
\best{0.290} & \best{0.334} & 
\second{0.297} & \second{0.335} & 
0.302 & 0.340 & 
0.302 & 0.346 & 
0.298 & 0.340 & 
0.307 & 0.342 & 
0.313 & 0.346 & 
0.321 & 0.351 & 
0.343 & 0.395 & 
0.325 & 0.366 \\
& 720 & 
\best{0.388} & \best{0.391} & 
0.401 & 0.397 & 
0.397 & \second{0.396} & 
0.402 & 0.404 & 
\second{0.391} & \second{0.396} &
0.407 & 0.398 & 
0.400 & 0.398 & 
0.408 & 0.403 & 
0.439 & 0.444 & 
0.421 & 0.415 \\
\cline{2-22}
& Avg & 
\best{0.269} & \best{0.318} &
0.276 & \second{0.321} &
0.280 & 0.325 &
0.281 & 0.331 &
\second{0.275} & 0.323 &
0.286 & 0.328 &
0.285 & 0.328 &
0.291 & 0.333 &
0.310 & 0.367 &
0.305 & 0.349 \\
\hline
\multirow{5}{*}{Traffic}
& 96 & 
\second{0.459} & \best{0.280} & 
\best{0.427} & 0.296 & 
0.540 & 0.331 & 
0.566 & 0.373 & 
0.462 & \second{0.285} & 
0.650 & 0.390 & 
0.477 & 0.305 & 
0.593 & 0.321 & 
0.650 & 0.397 & 
0.587 & 0.366 \\
& 192 & 
\second{0.470} & \best{0.280} & 
\best{0.445} & 0.298 & 
0.540 & 0.332 & 
0.526 & 0.344 & 
0.473 & \second{0.296} & 
0.601 & 0.364 & 
0.471 & 0.299 & 
0.617 & 0.336 & 
0.600 & 0.372 & 
0.604 & 0.373 \\
& 336 & 
\second{0.476} & \best{0.293} & 
\best{0.459} & 0.307 & 
0.552 & 0.335 & 
0.519 & 0.334 & 
0.498 & \second{0.296} & 
0.609 & 0.366 & 
0.485 & 0.305 & 
0.629 & 0.336 & 
0.606 & 0.374 & 
0.621 & 0.383 \\
& 720 & 
0.512 & \best{0.301} & 
\best{0.496} & \second{0.313} & 
0.590 & 0.351 & 
0.535 & 0.336 & 
\second{0.506} & \second{0.313} & 
0.647 & 0.387 & 
0.518 & 0.325 & 
0.640 & 0.350 & 
0.646 & 0.396 & 
0.626 & 0.382 \\
\cline{2-22}
& Avg & 
\second{0.479} & \best{0.288} &
\best{0.457} & 0.304 &
0.556 & 0.337 &
0.537 & 0.347 &
0.485 & \second{0.298} &
0.627 & 0.377 &
0.488 & 0.309 &
0.620 & 0.336 &
0.625 & 0.385 &
0.610 & 0.376 \\
\hline
\multirow{5}{*}{Electricity}
& 96 & 
\best{0.145} & \second{0.243} & 
\second{0.149} & \best{0.236} & 
0.188 & 0.284 & 
0.174 & 0.283 & 
0.153 & 0.247 & 
0.199 & 0.276 & 
0.174 & 0.259 & 
0.168 & 0.272 & 
0.195 & 0.277 & 
0.193 & 0.308 \\
& 192 & 
\second{0.163} & 0.262 & 
\best{0.162} & \best{0.253} & 
0.192 & 0.288 & 
0.170 & 0.275 & 
0.166 & \second{0.256} & 
0.198 & 0.277 & 
0.178 & 0.265 & 
0.184 & 0.289 & 
0.194 & 0.280 & 
0.201 & 0.315 \\
& 336 & 
\best{0.174} & \best{0.272} & 
0.200 & 0.310 & 
0.207 & 0.303 & 
\second{0.179} & 0.283 & 
0.185 & \second{0.277} & 
0.213 & 0.293 & 
0.196 & 0.282 & 
0.198 & 0.300 & 
0.207 & 0.296 & 
0.214 & 0.329 \\
& 720 & 
\best{0.198} & \best{0.296} & 
0.220 & 0.320 & 
0.246 & 0.333 & 
\second{0.212} & \second{0.308} & 
0.225 & 0.310 & 
0.254 & 0.325 & 
0.237 & 0.316 & 
0.220 & 0.320 & 
0.243 & 0.328 & 
0.246 & 0.355 \\
\cline{2-22}
& Avg & 
\best{0.170} & \best{0.268} &
0.183 & 0.280 &
0.208 & 0.302 &
0.184 & 0.287 &
\second{0.182} & \second{0.273} &
0.216 & 0.293 &
0.196 & 0.281 &
0.193 & 0.295 &
0.210 & 0.296 &
0.214 & 0.327 \\
\hline
\multirow{5}{*}{ILI}
& 24 & 
\second{1.930} & \second{0.873} & 
3.059 & 1.069 & 
2.764 & 1.064 & 
5.589 & 1.583 & 
2.918 & 1.153 & 
2.385 & 1.050 & 
\best{1.758} & \best{0.827} & 
2.317 & 0.934 & 
3.052 & 1.245 & 
3.228 & 1.260 \\
& 36 & 
2.143 & \second{0.912} & 
2.513 & 0.963 & 
2.783 & 1.065 & 
6.365 & 1.739 & 
2.730 & 1.081 & 
2.417 & 1.069 & 
\best{1.577} & \best{0.809} & 
\second{1.972} & 0.920 & 
2.804 & 1.153 & 
2.679 & 1.080 \\
& 48 & 
\best{1.914} & \second{0.870} & 
2.395 & 0.941 & 
2.844 & 1.083 & 
6.226 & 1.719 & 
2.730 & 1.083 & 
2.641 & 1.137 & 
\second{1.978} & \best{0.849} & 
2.238 & 0.940 & 
2.829 & 1.161 & 
2.622 & 1.078 \\
& 60 & 
\second{1.939} & \second{0.895} & 
2.599 & 0.992 & 
2.890 & 1.092 & 
5.486 & 1.590 & 
3.386 & 1.265 & 
2.818 & 1.173 & 
\best{1.627} & \best{0.808} & 
2.027 & 0.928 & 
2.973 & 1.193 & 
2.857 & 1.157 \\
\cline{2-22}
& Avg & 
\second{1.981} & \second{0.888} &
2.642 & 0.991 &
2.820 & 1.076 &
5.916 & 1.658 &
2.941 & 1.145 &
2.565 & 1.107 &
\best{1.735} & \best{0.823} &
2.139 & 0.931 &
2.915 & 1.188 &
2.847 & 1.144 \\
\hline
\end{tabular}
}
\end{table}

\section{Additional Baselines}
We have expanded our empirical evaluation to include a broader set of recent state-of-the-art (SOTA) baselines.
This supplementary experiment aims to comprehensively validate the robustness and superiority of the proposed Dynamic TMoE framework.
We incorporate six additional competitive models for comparison:
MoE-based Models: MoLE~\cite{Ni2024MoLE} and FreqMoE~\cite{liu2025freqmoe}.
Transformer-based Models: iTransformer~\cite{Liu2024iTransformer} and PerimidFormer~\cite{wu2024perimidformer}.
Other Advanced Architectures: TSLANet~\cite{eldele2024tslanet} and TimeExpert~\cite{ma2025timeexpert}.

As demonstrated in Table~\ref{tab:additional_baselines}, Dynamic TMoE maintains a highly competitive edge against these recently proposed architectures.
Specifically, our framework achieves the best performance on 5 out of the 9 datasets (Weather, ETTh1, ETTm1, ETTm2, and ILI).

\begin{table*}[h]
\centering
\caption{
    Multivariate time series forecasting results compared with additional baselines.
    The best results are in \best{bold} and the second best are \second{underlined}.
}
\label{tab:additional_baselines}
\renewcommand{\arraystretch}{1.1}
\resizebox{\textwidth}{!}{
\begin{tabular}{c|cc|cc|cc|cc|cc|cc|cc}
\hline
\multirow{2}{*}{Models} & \multicolumn{2}{c|}{\textbf{Dynamic TMoE}} & \multicolumn{2}{c|}{MoLE} & \multicolumn{2}{c|}{FreqMoE} & \multicolumn{2}{c|}{iTransformer} & \multicolumn{2}{c|}{PerimidFormer} & \multicolumn{2}{c|}{TSLANet} & \multicolumn{2}{c}{TimeExpert} \\
& \multicolumn{2}{c|}{\textbf{(Ours)}} & \multicolumn{2}{c|}{(Baseline)} & \multicolumn{2}{c|}{(Baseline)} & \multicolumn{2}{c|}{(Baseline)} & \multicolumn{2}{c|}{(Baseline)} & \multicolumn{2}{c|}{(Baseline)} & \multicolumn{2}{c}{(Baseline)} \\
\cmidrule(lr){2-3}
\cmidrule(lr){4-5}
\cmidrule(lr){6-7}
\cmidrule(lr){8-9}
\cmidrule(lr){10-11}
\cmidrule(lr){12-13}
\cmidrule(lr){14-15}
Metric & MSE & MAE & MSE & MAE & MSE & MAE & MSE & MAE & MSE & MAE & MSE & MAE & MSE & MAE \\
\hline
Weather & \best{0.240} & \best{0.270} & 0.283 & 0.331 & 0.263 & 0.286 & 0.258 & 0.278 & 0.257 & 0.281 & 0.259 & 0.282 & \second{0.251} & \second{0.277} \\
Exchange & \second{0.351} & \second{0.397} & \best{0.286} & \best{0.378} & 0.370 & 0.408 & 0.360 & 0.403 & 0.361 & 0.402 & 0.361 & 0.404 & 0.352 & 0.398 \\
ETTh1 & \best{0.429} & \best{0.430} & 0.475 & 0.464 & 0.458 & 0.441 & 0.454 & 0.448 & 0.460 & 0.449 & 0.458 & 0.454 & \second{0.430} & \second{0.435} \\
ETTh2 & \second{0.368} & \second{0.398} & 0.732 & 0.558 & \best{0.364} & \best{0.395} & 0.383 & 0.407 & 0.389 & 0.408 & \second{0.368} & 0.399 & 0.374 & 0.404 \\
ETTm1 & \best{0.376} & \best{0.394} & 0.418 & 0.420 & \second{0.387} & \second{0.399} & 0.407 & 0.410 & 0.398 & 0.409 & 0.388 & 0.401 & 0.389 & 0.402 \\
ETTm2 & \best{0.269} & \best{0.318} & 0.406 & 0.403 & \second{0.279} & \second{0.323} & 0.288 & 0.332 & 0.290 & 0.330 & 0.283 & 0.327 & 0.283 & 0.329 \\
Traffic & 0.479 & \second{0.288} & 0.528 & 0.347 & 0.522 & 0.339 & \best{0.428} & \best{0.282} & \second{0.461} & 0.304 & 0.494 & 0.314 & 0.500 & 0.322 \\
Electricity & \second{0.170} & \second{0.268} & 0.191 & 0.284 & 0.205 & 0.289 & 0.178 & 0.270 & \best{0.169} & \best{0.262} & 0.187 & 0.278 & 0.203 & 0.294 \\
ILI & \best{1.981} & \best{0.888} & 2.630 & 1.044 & 4.083 & 1.472 & 2.257 & 0.967 & 2.210 & \second{0.914} & \second{2.209} & 0.938 & 2.471 & 1.014 \\
\hline
\end{tabular}
}
\end{table*}

\section{Full Ablations}
\label{appendix:full ablations}
In this section, we provide detailed definitions of the model variants used in our ablation studies and present the comprehensive results.
This supplements the findings in Section~\ref{Sec:Ablation Study} of the main paper.

\subsection{Ablation on Drift-Aware Adaptation and Temporal Memory Router}
\textbf{Model Configurations:}

\textbf{\ding{172} Dynamic TMoE:} Our proposed model integrating the drift-aware adaptation strategy, GRU-based temporal memory router and anomaly state memory.

\textbf{\ding{173} w/o Drift-Aware Adaptation:} This variant removes the MMD drift detector and the dynamic expert expansion mechanism.
The model relies solely on the fixed base experts trained on the initial data, prohibiting the instantiation and targeted alignment of new experts.

\textbf{\ding{174} w/ Linear Router:} Replaces the GRU-based router with a simple linear projection layer.
The routing decision depends only on the current patch features, ignoring historical routing information.

\textbf{\ding{175} w/ MLP Router:} Replaces the GRU-based router with a Multilayer Perceptron (MLP).
While non-linear, it still lacks the explicit temporal recurrence to maintain routing consistency.

\textbf{\ding{176} w/o Anomaly Memory:} Removes the Anomaly State Repository.
The router does not have access to stored historical drift states when making decisions.

Table~\ref{tab:ablation_main_full} presents the full comparison results.
It is evident that removing the dynamic mechanism significantly degrades performance on non-stationary datasets.
Furthermore, both Linear and MLP routers underperform compared to the GRU router, highlighting the importance of temporal memory in stabilizing expert selection.
\begin{table*}[h]
\centering
\caption{
    Full results of the ablation study on dynamic mechanism, temporal memory router, and anomaly memory components.
    We compare the proposed Dynamic TMoE (Model \ding{172}) against variants that remove the dynamic mechanism (Model \ding{173}), replace the GRU router with Linear (Model \ding{174}) or MLP (Model \ding{175}) counterparts, and exclude the anomaly state memory (Model \ding{176}).
    \textit{Note: For ILI dataset, the prediction lengths are 24, 36, 48, and 60, corresponding to the shown 96, 192, 336, and 720 positions respectively.}
}
\label{tab:ablation_main_full}
\vspace{0.1in}
\setlength{\tabcolsep}{3pt}
\renewcommand{\arraystretch}{1.25}
\resizebox{\textwidth}{!}{
\begin{tabular}{c|c|C{1.1cm}C{1.1cm}C{1.1cm}|c|c|cc|cc|cc|cc|cc|cc}
\hline
\multirow{2}{*}{Model} & \multirow{2}{*}{\makecell{Dynamic\\Mechanism}}
& \multicolumn{3}{c|}{Temporal Memory Router}
& \multirow{2}{*}{\makecell{Anomaly\\Memory}}
& \multirow{2}{*}{Len.}
& \multicolumn{2}{c|}{ETTh1}
& \multicolumn{2}{c|}{ETTh2}
& \multicolumn{2}{c|}{ETTm1}
& \multicolumn{2}{c|}{ETTm2}
& \multicolumn{2}{c|}{Weather}
& \multicolumn{2}{c}{ILI} \\
\cmidrule(lr){3-5}
\cmidrule(lr){8-9}
\cmidrule(lr){10-11}
\cmidrule(lr){12-13}
\cmidrule(lr){14-15}
\cmidrule(lr){16-17}
\cmidrule(lr){18-19}
& & GRU & Linear & MLP & & 
& MSE & MAE & MSE & MAE & MSE & MAE & MSE & MAE & MSE & MAE & MSE & MAE \\
\hline

\multirow{5}{*}{\ding{172}} 
&  &  &  &  &  & 96
& 0.369 & 0.393 & 0.283 & 0.335 & 0.311 & 0.352 & 0.168 & 0.253 & 0.154 & 0.201 & 1.930 & 0.873 \\
&  &  &  &  &  & 192
& 0.417 & 0.420 & 0.358 & 0.388 & 0.358 & 0.381 & 0.232 & 0.296 & 0.202 & 0.246 & 2.143 & 0.912 \\
& \checkmark & \checkmark &  &  & \checkmark & 336
& 0.459 & 0.445 & 0.408 & 0.427 & 0.388 & 0.403 & 0.290 & 0.334 & 0.260 & 0.289 & 1.914 & 0.870 \\
&  &  &  &  &  & 720
& 0.471 & 0.463 & 0.422 & 0.442 & 0.447 & 0.441 & 0.388 & 0.391 & 0.343 & 0.344 & 1.939 & 0.895 \\
&  &  &  &  &  & Avg
& \best{0.429} & \best{0.430}
& \best{0.368} & \best{0.398}
& \best{0.376} & \best{0.394}
& \best{0.269} & \best{0.318}
& \best{0.240} & \best{0.270}
& \best{1.981} & \best{0.888} \\
\hline

\multirow{5}{*}{\ding{173}} 
&  &  &  &  &  & 96
& 0.377 & 0.394 & 0.285 & 0.337 & 0.324 & 0.361 & 0.175 & 0.259 & 0.163 & 0.209 & 1.889 & 0.828 \\
&  &  &  &  &  & 192
& 0.416 & 0.419 & 0.364 & 0.394 & 0.360 & 0.382 & 0.237 & 0.300 & 0.208 & 0.249 & 2.151 & 0.917 \\
& \texttimes & \checkmark &  &  & \checkmark & 336
& 0.469 & 0.442 & 0.413 & 0.430 & 0.393 & 0.406 & 0.295 & 0.337 & 0.268 & 0.293 & 2.282 & 0.922 \\
&  &  &  &  &  & 720
& 0.485 & 0.471 & 0.422 & 0.442 & 0.452 & 0.442 & 0.390 & 0.395 & 0.346 & 0.344 & 2.262 & 0.979 \\
&  &  &  &  &  & Avg
& 0.436 & 0.432
& 0.371 & 0.401
& 0.382 & 0.397
& 0.274 & 0.323
& 0.246 & 0.274
& 2.146 & 0.911 \\
\hline

\multirow{5}{*}{\ding{174}} 
&  &  &  &  &  & 96
& 0.372 & 0.393 & 0.293 & 0.343 & 0.326 & 0.364 & 0.169 & 0.254 & 0.163 & 0.209 & 2.260 & 0.928 \\
&  &  &  &  &  & 192
& 0.422 & 0.419 & 0.360 & 0.390 & 0.359 & 0.380 & 0.233 & 0.297 & 0.210 & 0.252 & 2.228 & 0.913 \\
& \checkmark &  & \checkmark &  & \checkmark & 336
& 0.469 & 0.442 & 0.414 & 0.428 & 0.391 & 0.406 & 0.293 & 0.335 & 0.267 & 0.293 & 2.399 & 0.969 \\
&  &  &  &  &  & 720
& 0.479 & 0.466 & 0.430 & 0.447 & 0.448 & 0.439 & 0.389 & 0.393 & 0.349 & 0.347 & 2.267 & 0.982 \\
&  &  &  &  &  & Avg
& 0.436 & 0.430
& 0.374 & 0.402
& 0.381 & 0.397
& 0.271 & 0.320
& 0.247 & 0.275
& 2.289 & 0.948 \\
\hline

\multirow{5}{*}{\ding{175}} 
&  &  &  &  &  & 96
& 0.373 & 0.393 & 0.288 & 0.338 & 0.323 & 0.360 & 0.172 & 0.256 & 0.157 & 0.203 & 2.352 & 0.943 \\
&  &  &  &  &  & 192
& 0.421 & 0.421 & 0.373 & 0.396 & 0.359 & 0.380 & 0.234 & 0.298 & 0.207 & 0.249 & 2.446 & 0.931 \\
& \checkmark &  &  & \checkmark & \checkmark & 336
& 0.474 & 0.446 & 0.411 & 0.429 & 0.391 & 0.404 & 0.296 & 0.338 & 0.267 & 0.293 & 2.136 & 0.885 \\
&  &  &  &  &  & 720
& 0.485 & 0.475 & 0.437 & 0.451 & 0.458 & 0.444 & 0.399 & 0.398 & 0.347 & 0.346 & 2.025 & 0.909 \\
&  &  &  &  &  & Avg
& 0.438 & 0.434
& 0.377 & 0.404
& 0.383 & 0.397
& 0.275 & 0.322
& 0.245 & 0.273
& 2.240 & 0.917 \\
\hline

\multirow{5}{*}{\ding{176}} 
&  &  &  &  &  & 96
& 0.369 & 0.393 & 0.288 & 0.339 & 0.325 & 0.363 & 0.169 & 0.253 & 0.159 & 0.205 & 2.469 & 0.962 \\
&  &  &  &  &  & 192
& 0.417 & 0.421 & 0.368 & 0.395 & 0.363 & 0.383 & 0.235 & 0.298 & 0.209 & 0.251 & 2.546 & 0.968 \\
& \checkmark & \checkmark &  &  & \texttimes & 336
& 0.477 & 0.445 & 0.412 & 0.428 & 0.393 & 0.406 & 0.295 & 0.337 & 0.266 & 0.293 & 2.263 & 0.914 \\
&  &  &  &  &  & 720
& 0.490 & 0.478 & 0.426 & 0.445 & 0.448 & 0.440 & 0.394 & 0.396 & 0.350 & 0.347 & 2.119 & 0.958 \\
&  &  &  &  &  & Avg
& 0.438 & 0.434
& 0.373 & 0.402
& 0.382 & 0.398
& 0.273 & 0.321
& 0.246 & 0.274
& 2.349 & 0.951 \\
\hline

\end{tabular}
}
\end{table*}

\subsection{Ablation on Expert Diversity and Relation Layer}
\textbf{Model Configurations:}

\textbf{\ding{172} Dynamic TMoE:} Our proposed model using a diverse set of experts: Identity, Trend, Seasonality and Fluctuation experts.

\textbf{\ding{173} w/o Relation Layer:} The full model but with the Cyclic Relation Layer removed from all experts.
This variant treats variables independently without modeling inter-variable correlations.

\textbf{\ding{174} All Identity:} Replaces all base and dynamic experts with Identity experts.

\textbf{\ding{175} All Trend:} Replaces all base and dynamic experts with Trend experts.

\textbf{\ding{176} All Seasonality:} Replaces all base and dynamic experts with Seasonality experts.

\textbf{\ding{177} All Fluctuation:} Replaces all base and dynamic experts with Fluctuation experts.

Table~\ref{tab:ablation_experts_full} shows the impact of expert diversity.
The heterogeneous ensemble consistently outperforms any single-type homogeneous ensemble, proving that real-world time series contain mixed patterns that require specialized processing.
The significant drop in Model \ding{173} further confirms that modeling cross-variable relationships via the Cyclic Relation Layer is crucial for multivariate forecasting.

\begin{table*}[h]
\centering
\caption{
    Full ablation results on expert diversity and the expert relation layer across six datasets.
    We evaluate the impact of the heterogeneous expert pool (Model \ding{172}) versus the removal of the cyclic relation layer (Model \ding{173}) and various homogeneous configurations: All Identity (Model \ding{174}), All Trend (Model \ding{175}), All Seasonality (Model \ding{176}), and All Fluctuation (Model \ding{177}).
    \textit{Note: For ILI dataset, the prediction lengths are 24, 36, 48, and 60, corresponding to the shown 96, 192, 336, and 720 positions respectively.}
}
\label{tab:ablation_experts_full}
\vspace{0.1in}
\setlength{\tabcolsep}{3pt}
\renewcommand{\arraystretch}{1.25}
\resizebox{\textwidth}{!}{
\begin{tabular}{c|ccccc|c|c|cc|cc|cc|cc|cc|cc}
\hline
\multirow{3}{*}{Model} 
& \multicolumn{5}{c|}{Experts Diversity}
& \multirow{3}{*}{\makecell{Relation\\Layer}}
& \multirow{3}{*}{Len.}
& \multicolumn{2}{c|}{ETTh1}
& \multicolumn{2}{c|}{ETTh2}
& \multicolumn{2}{c|}{ETTm1}
& \multicolumn{2}{c|}{ETTm2}
& \multicolumn{2}{c|}{Weather}
& \multicolumn{2}{c}{ILI} \\
\cmidrule(lr){2-6}
\cmidrule(lr){9-10}
\cmidrule(lr){11-12}
\cmidrule(lr){13-14}
\cmidrule(lr){15-16}
\cmidrule(lr){17-18}
\cmidrule(lr){19-20}
& \makecell{Hetero-\\geneous} & \makecell{All\\identity} & \makecell{All\\trend} & \makecell{All\\seasonality} & \makecell{All\\fluctuation} &
& & MSE & MAE & MSE & MAE & MSE & MAE & MSE & MAE & MSE & MAE & MSE & MAE \\
\hline

\multirow{5}{*}{\ding{172}} 
&  &  &  &  &  &  & 96
& 0.369 & 0.393 & 0.283 & 0.335 & 0.311 & 0.352 & 0.168 & 0.253 & 0.154 & 0.201 & 1.930 & 0.873 \\
&  &  &  &  &  &  & 192
& 0.417 & 0.420 & 0.358 & 0.388 & 0.358 & 0.381 & 0.232 & 0.296 & 0.202 & 0.246 & 2.143 & 0.912 \\
& \checkmark &  &  &  &  & \checkmark & 336
& 0.459 & 0.445 & 0.408 & 0.427 & 0.388 & 0.403 & 0.290 & 0.334 & 0.260 & 0.289 & 1.914 & 0.870 \\
&  &  &  &  &  &  & 720
& 0.471 & 0.463 & 0.422 & 0.442 & 0.447 & 0.441 & 0.388 & 0.391 & 0.343 & 0.344 & 1.939 & 0.895 \\
&  &  &  &  &  &  & Avg
& \best{0.429} & \best{0.430} & \best{0.368} & \best{0.398} & \best{0.376} & \best{0.394} & \best{0.269} & \best{0.318} & \best{0.240} & \best{0.270} & \best{1.981} & \best{0.888} \\
\hline

\multirow{5}{*}{\ding{173}} 
&  &  &  &  &  &  & 96
& 0.375 & 0.393 & 0.290 & 0.341 & 0.320 & 0.358 & 0.174 & 0.257 & 0.163 & 0.209 & 2.455 & 0.937 \\
&  &  &  &  &  &  & 192
& 0.419 & 0.421 & 0.364 & 0.393 & 0.361 & 0.383 & 0.237 & 0.299 & 0.209 & 0.251 & 2.299 & 0.925 \\
& \checkmark &  &  &  &  & \texttimes & 336
& 0.487 & 0.452 & 0.426 & 0.438 & 0.389 & 0.404 & 0.296 & 0.338 & 0.266 & 0.292 & 2.351 & 0.951 \\
&  &  &  &  &  &  & 720
& 0.482 & 0.469 & 0.424 & 0.444 & 0.452 & 0.441 & 0.389 & 0.393 & 0.349 & 0.347 & 2.150 & 0.925 \\
&  &  &  &  &  &  & Avg
& 0.441 & 0.434 & 0.376 & 0.404 & 0.381 & 0.396 & 0.274 & 0.322 & 0.247 & 0.275 & 2.314 & 0.934 \\
\hline

\multirow{5}{*}{\ding{174}} 
&  &  &  &  &  &  & 96
& 0.375 & 0.393 & 0.292 & 0.341 & 0.323 & 0.361 & 0.173 & 0.257 & 0.162 & 0.207 & 2.531 & 0.988 \\
&  &  &  &  &  &  & 192
& 0.420 & 0.420 & 0.362 & 0.391 & 0.362 & 0.383 & 0.237 & 0.299 & 0.208 & 0.250 & 2.494 & 0.992 \\
&  & \checkmark &  &  &  & \checkmark & 336
& 0.484 & 0.452 & 0.427 & 0.441 & 0.395 & 0.408 & 0.295 & 0.337 & 0.268 & 0.294 & 2.433 & 0.985 \\
&  &  &  &  &  &  & 720
& 0.480 & 0.469 & 0.430 & 0.447 & 0.455 & 0.443 & 0.389 & 0.393 & 0.347 & 0.346 & 2.261 & 0.974 \\
&  &  &  &  &  &  & Avg
& 0.440 & 0.434 & 0.378 & 0.405 & 0.384 & 0.399 & 0.274 & 0.322 & 0.246 & 0.274 & 2.430 & 0.985 \\
\hline

\multirow{5}{*}{\ding{175}} 
&  &  &  &  &  &  & 96
& 0.377 & 0.395 & 0.291 & 0.342 & 0.329 & 0.367 & 0.173 & 0.256 & 0.162 & 0.207 & 2.480 & 0.984 \\
&  &  &  &  &  &  & 192
& 0.421 & 0.420 & 0.365 & 0.390 & 0.360 & 0.381 & 0.237 & 0.299 & 0.209 & 0.251 & 2.455 & 0.989 \\
&  &  & \checkmark &  &  & \checkmark & 336
& 0.479 & 0.448 & 0.414 & 0.432 & 0.396 & 0.410 & 0.295 & 0.337 & 0.267 & 0.293 & 2.404 & 0.979 \\
&  &  &  &  &  &  & 720
& 0.494 & 0.479 & 0.425 & 0.444 & 0.457 & 0.445 & 0.389 & 0.392 & 0.347 & 0.345 & 2.295 & 0.978 \\
&  &  &  &  &  &  & Avg
& 0.443 & 0.435 & 0.374 & 0.402 & 0.386 & 0.401 & 0.273 & 0.321 & 0.246 & 0.274 & 2.409 & 0.982 \\
\hline

\multirow{5}{*}{\ding{176}} 
&  &  &  &  &  &  & 96
& 0.378 & 0.393 & 0.289 & 0.339 & 0.324 & 0.361 & 0.173 & 0.257 & 0.162 & 0.208 & 1.793 & 0.827 \\
&  &  &  &  &  &  & 192
& 0.426 & 0.421 & 0.376 & 0.397 & 0.362 & 0.383 & 0.236 & 0.299 & 0.209 & 0.251 & 2.353 & 0.979 \\
&  &  &  & \checkmark &  & \checkmark & 336
& 0.474 & 0.443 & 0.411 & 0.429 & 0.393 & 0.407 & 0.296 & 0.338 & 0.265 & 0.291 & 1.985 & 0.867 \\
&  &  &  &  &  &  & 720
& 0.474 & 0.465 & 0.430 & 0.448 & 0.455 & 0.444 & 0.392 & 0.394 & 0.347 & 0.345 & 2.152 & 0.944 \\
&  &  &  &  &  &  & Avg
& 0.438 & 0.431 & 0.377 & 0.403 & 0.384 & 0.399 & 0.274 & 0.322 & 0.246 & 0.274 & 2.071 & 0.904 \\
\hline

\multirow{5}{*}{\ding{177}} 
&  &  &  &  &  &  & 96
& 0.374 & 0.392 & 0.291 & 0.340 & 0.329 & 0.367 & 0.173 & 0.256 & 0.162 & 0.207 & 2.058 & 0.858 \\
&  &  &  &  &  &  & 192
& 0.421 & 0.419 & 0.372 & 0.394 & 0.363 & 0.382 & 0.240 & 0.300 & 0.209 & 0.251 & 2.531 & 0.945 \\
&  &  &  &  & \checkmark & \checkmark & 336
& 0.482 & 0.448 & 0.412 & 0.429 & 0.394 & 0.406 & 0.297 & 0.338 & 0.265 & 0.292 & 2.275 & 0.924 \\
&  &  &  &  &  &  & 720
& 0.476 & 0.466 & 0.442 & 0.455 & 0.454 & 0.442 & 0.390 & 0.393 & 0.348 & 0.346 & 2.153 & 0.949 \\
&  &  &  &  &  &  & Avg
& 0.438 & 0.431 & 0.379 & 0.404 & 0.385 & 0.399 & 0.275 & 0.322 & 0.246 & 0.274 & 2.254 & 0.919 \\
\hline

\end{tabular}
}
\end{table*}

\subsection{Ablation on Adaptation Strategies}
\textbf{Model Configurations:}

\textbf{\ding{172} Dynamic TMoE:} Our proposed model using MMD for drift detection, Post-Addition Alignment strategy, Drift Pattern Profiler for selection, and the expert pruning mechanism.

\textbf{\ding{173} w/ Simple Detection:} Replaces the MMD detector with a statistical detector based on mean and variance shifts.

\textbf{\ding{174} Fine-tune Base Experts:} Instead of using our Post-Addition Alignment strategy, \ding{174} directly fine-tunes the parameters of the existing base experts on drift data.

\textbf{\ding{175} w/ Random Selection:} Instead of using the Drift Pattern Profiler to determine the type of the new expert, a type is chosen randomly from the available pool.

\textbf{\ding{176} w/o Expert Pruning:} Disables the mechanism that removes low-usage experts.
This expert pool is allowed to grow indefinitely.

Table~\ref{tab:ablation_dynamic_full} details the effectiveness of each component in the adaptation strategies.
The MMD detector (Model \ding{172}) shows superior sensitivity to complex distribution shifts compared to simple statistics (Model \ding{173}).
Notably, Model \ding{174} suffers from catastrophic forgetting, leading to poor performance on datasets with recurrent patterns.
The comparison between Model \ding{172} and Model \ding{175} confirms that our Drift Pattern Profiler effectively identifies the dominant pattern in the drift data.
The results of Model \ding{176} prove the effectiveness of expert pruning.

\begin{table*}[h]
\centering
\caption{
    Full ablation results on adaptation strategies across seven datasets.
    We analyze the contribution of each component in the adaptive strategies: comparing MMD-based drift detection against simple statistical detection (Model \ding{173}), validating the Post-Addition Alignment against fine-tuning base experts (Model \ding{174}), evaluating the Drift Pattern Profiler against random new expert selection (Model \ding{175}), and assessing the necessity of expert pruning (Model \ding{176}).
    \textit{Note: For ILI dataset, the prediction lengths are 24, 36, 48, and 60, corresponding to the shown 96, 192, 336, and 720 positions respectively.}
}
\label{tab:ablation_dynamic_full}
\vspace{0.1in}
\setlength{\tabcolsep}{2.5pt}
\renewcommand{\arraystretch}{1.25}
\resizebox{\textwidth}{!}{
\begin{tabular}{c|cc|cc|cc|c|c|cc|cc|cc|cc|cc|cc|cc}
\hline
\multirow{3}{*}{Model} & \multicolumn{2}{c|}{Drift Detection} & \multicolumn{2}{c|}{New Expert Alignment Strategy} & \multicolumn{2}{c|}{New Expert Selection} & \multirow{3}{*}{\makecell{Experts\\Pruning}} & \multirow{3}{*}{Len.} & \multicolumn{2}{c|}{ETTh1} & \multicolumn{2}{c|}{ETTh2} & \multicolumn{2}{c|}{ETTm1} & \multicolumn{2}{c|}{ETTm2} & \multicolumn{2}{c|}{Weather} & \multicolumn{2}{c|}{ILI} & \multicolumn{2}{c}{Exchange}\\
\cmidrule(lr){2-3}
\cmidrule(lr){4-5}
\cmidrule(lr){6-7}
\cmidrule(lr){10-11}
\cmidrule(lr){12-13}
\cmidrule(lr){14-15}
\cmidrule(lr){16-17}
\cmidrule(lr){18-19}
\cmidrule(lr){20-21}
\cmidrule(lr){22-23}
& MMD & Simple & \makecell{Post-Addition\\Alignment} & \makecell{Fine-tune the\\Base Experts} & \makecell{Drift Pattern\\Profiler} & Random &  &  & MSE & MAE & MSE & MAE & MSE & MAE & MSE & MAE & MSE & MAE & MSE & MAE & MSE & MAE \\
\hline

\multirow{5}{*}{\ding{172}} 
&  &  &  &  &  &  &  & 96 & 0.369 & 0.393 & 0.283 & 0.335 & 0.311 & 0.352 & 0.168 & 0.253 & 0.154 & 0.201 & 1.930 & 0.873 & 0.080 & 0.197 \\
&  &  &  &  &  &  &  & 192 & 0.417 & 0.420 & 0.358 & 0.388 & 0.358 & 0.381 & 0.232 & 0.296 & 0.202 & 0.246 & 2.143 & 0.912 & 0.170 & 0.293 \\
& \checkmark &  & \checkmark &  & \checkmark &  & \checkmark & 336 & 0.459 & 0.445 & 0.408 & 0.427 & 0.388 & 0.403 & 0.290 & 0.334 & 0.260 & 0.289 & 1.914 & 0.870 & 0.325 & 0.412 \\
&  &  &  &  &  &  &  & 720 & 0.471 & 0.463 & 0.422 & 0.442 & 0.447 & 0.441 & 0.388 & 0.391 & 0.343 & 0.344 & 1.939 & 0.895 & 0.828 & 0.686 \\
&  &  &  &  &  &  &  & Avg & \best{0.429} & \best{0.430} & \best{0.368} & \best{0.398} & \best{0.376} & \best{0.394} & \best{0.269} & \best{0.318} & \best{0.240} & \best{0.270} & \best{1.981} & \best{0.888} & \best{0.351} & \best{0.397} \\
\hline

\multirow{5}{*}{\ding{173}} 
&  &  &  &  &  &  &  & 96 & 0.373 & 0.392 & 0.284 & 0.337 & 0.324 & 0.361 & 0.171 & 0.255 & 0.161 & 0.206 & 1.853 & 0.834 & 0.082 & 0.199 \\
&  &  &  &  &  &  &  & 192 & 0.416 & 0.419 & 0.364 & 0.393 & 0.359 & 0.381 & 0.238 & 0.300 & 0.209 & 0.250 & 2.408 & 0.984 & 0.172 & 0.294\\
&  & \checkmark & \checkmark &  & \checkmark &  & \checkmark & 336 & 0.477 & 0.453 & 0.415 & 0.432 & 0.396 & 0.407 & 0.295 & 0.337 & 0.265 & 0.291 & 2.109 & 0.903 & 0.362 & 0.436\\
&  &  &  &  &  &  &  & 720 & 0.479 & 0.468 & 0.422 & 0.442 & 0.451 & 0.441 & 0.393 & 0.396 & 0.349 & 0.347 & 1.960 & 0.893 & 0.910 & 0.721 \\
&  &  &  &  &  &  &  & Avg & 0.436 & 0.433 & 0.371 & 0.401 & 0.382 & 0.397 & 0.274 & 0.322 & 0.246 & 0.274 & 2.082 & 0.904 &0.382 & 0.413 \\
\hline

\multirow{5}{*}{\ding{174}} 
&  &  &  &  &  &  &  & 96 & 0.371 & 0.393 & 0.285 & 0.337 & 0.319 & 0.357 & 0.172 & 0.256 & 0.164 & 0.210 & 1.936 & 0.838 & 0.082 & 0.199 \\
&  &  &  &  &  &  &  & 192 & 0.416 & 0.421 & 0.362 & 0.391 & 0.363 & 0.384 & 0.238 & 0.300 & 0.209 & 0.251 & 2.465 & 0.991 & 0.172 & 0.294 \\
& \checkmark &  &  & \checkmark & \checkmark &  & \checkmark & 336 & 0.459 & 0.445 & 0.412 & 0.428 & 0.390 & 0.403 & 0.295 & 0.337 & 0.267 & 0.293 & 2.387 & 0.974 & 0.329 & 0.414 \\
&  &  &  &  &  &  &  & 720 & 0.473 & 0.464 & 0.425 & 0.444 & 0.452 & 0.442 & 0.392 & 0.394 & 0.346 & 0.344 & 2.164 & 0.948 & 0.819 & 0.682 \\
&  &  &  &  &  &  &  & Avg & 0.430 & 0.431 & 0.371 & 0.400 & 0.381 & 0.397 & 0.274 & 0.322 & 0.246 & 0.274 & 2.238 & 0.938 & 0.351 & 0.397 \\
\hline

\multirow{5}{*}{\ding{175}}
&  &  &  &  &  &  &  & 96 & 0.371 & 0.393 & 0.287 & 0.339 & 0.322 & 0.360 & 0.176 & 0.260 & 0.155 & 0.201 & 1.942 & 0.863 & 0.082 & 0.199 \\
&  &  &  &  &  &  &  & 192 & 0.416 & 0.421 & 0.362 & 0.391 & 0.363 & 0.384 & 0.237 & 0.299 & 0.206 & 0.249 & 2.266 & 0.944 & 0.172 & 0.294 \\
& \checkmark &  & \checkmark &  &  & \checkmark & \checkmark & 336 & 0.459 & 0.445 & 0.412 & 0.428 & 0.390 & 0.403 & 0.295 & 0.338 & 0.265 & 0.292 & 2.672 & 1.055 & 0.329 & 0.414 \\
&  &  &  &  &  &  &  & 720 & 0.472 & 0.463 & 0.422 & 0.442 & 0.450 & 0.440 & 0.391 & 0.393 & 0.347 & 0.346 & 2.162 & 0.947 & 0.877 & 0.710 \\
&  &  &  &  &  &  &  & Avg & 0.429 & 0.431 & 0.371 & 0.400 & 0.381 & 0.397 & 0.275 & 0.322 & 0.243 & 0.272 & 2.260 & 0.952 & 0.365 & 0.404 \\
\hline

\multirow{5}{*}{\ding{176}}
&  &  &  &  &  &  &  & 96 & 0.374 & 0.393 & 0.293 & 0.343 & 0.332 & 0.367 & 0.171 & 0.255 & 0.157 & 0.203 & 2.097 & 0.875 & 0.082 & 0.199 \\
&  &  &  &  &  &  &  & 192 & 0.417 & 0.420 & 0.367 & 0.392 & 0.362 & 0.383 & 0.233 & 0.297 & 0.206 & 0.249 & 2.384 & 0.935 & 0.175 & 0.298 \\
& \checkmark &  & \checkmark &  & \checkmark &  & \texttimes & 336 & 0.466 & 0.445 & 0.416 & 0.435 & 0.393 & 0.406 & 0.293 & 0.335 & 0.266 & 0.292 & 2.277 & 0.915 & 0.325 & 0.415 \\
&  &  &  &  &  &  &  & 720 & 0.490 & 0.479 & 0.423 & 0.443 & 0.454 & 0.442 & 0.393 & 0.394 & 0.350 & 0.348 & 2.281 & 0.981 & 0.840 & 0.689 \\
&  &  &  &  &  &  &  & Avg & 0.437 & 0.434 & 0.375 & 0.403 & 0.385 & 0.400 & 0.272 & 0.320 & 0.245 & 0.273 & 2.260 & 0.927 & 0.355 & 0.399 \\
\hline

\end{tabular}
}
\end{table*}

\section{Showcases}
To qualitatively evaluate the forecasting performance, we visualize the prediction results from the test sets of representative benchmarks, including Electricity, Traffic, and ILI (Figure~\ref{fig:showcase_ecl},~\ref{fig:showcase_traffic},~\ref{fig:showcase_ili}).
These figures compare the forecasting curves of Dynamic TMoE against competitive state-of-the-art baselines, demonstrating that Dynamic TMoE exhibits superior performance.

\section{Disentangling Architectural Innovation from Raw Capacity Scaling}
In this section, we provide a detailed clarification to disentangle the performance improvements of Dynamic TMoE from mere model capacity scaling.
While conventional MoE architectures often rely on increasing the number of experts to boost representational capacity, our core innovation lies in the unified \textbf{"Perception-Decision-Adaptation"} framework.
This framework shifts the MoE paradigm from a static, memoryless architecture into an evolvable, history-aware dynamic system.
The empirical evidence across our ablation and sensitivity studies demonstrates that our evolutionary logic, rather than raw parameter count, drives the state-of-the-art performance.

\subsection{Performance Degradation with Excess Capacity}
To verify that the performance gains are not simply a byproduct of adding more experts, we analyze the model's sensitivity to the drift expert pool size.
As illustrated in Figure~\ref{fig:parameter_exp}(c), increasing the model's capacity by enlarging the drift expert pool size actually degrades forecasting performance.
This empirical degradation indicates that larger expert pools introduce unnecessary complexity and overfitting risks. Dynamic TMoE performs optimally with a compact, dynamically managed pool, proving that our success relies on precise expert utilization rather than capacity maximization.

\subsection{The Necessity of the Dynamic Evolution Mechanism}
The critical role of our evolutionary logic over static capacity is further validated by our ablation studies.
As demonstrated in Table~\ref{tab:ablation_main}, disabling the Drift-Aware Adaptation mechanism within the base architecture leads to a direct performance drop.
This confirms that static architectures, even those equipped with multiple experts, fail to accommodate evolving distributions effectively without our dynamic tracking and adaptation mechanisms.

\subsection{System-Level Synergy over Individual Modules}
Dynamic TMoE's superiority is derived from the coupling of its architectural innovations rather than the stacking of independent modules.
Our Temporal Memory Router fundamentally rethinks MoE routing by formulating it as a sequential modeling task, ensuring context-aware expert selection and mitigating the erratic switching inherent to memoryless designs.
Concurrently, our Heterogeneous Experts serve as a dynamic reservoir of distinct inductive biases, effectively disentangling complex temporal dynamics in a way that homogeneous expert designs cannot (as validated in Table~\ref{tab:ablation_experts}).
Together, these components ensure that the model autonomously shifts its focus and adapts to temporal distribution shifts without relying on brute-force capacity scaling.

\section{Limitations and Future Work}
\subsection{Limitations}
\textbf{Computational Complexity of Drift Detection.}
The primary computational overhead in our framework resides in the Maximum Mean Discrepancy (MMD) drift detector. 
The kernel matrix calculation entails a quadratic complexity $O(N^2)$ relative to the window size $N$, which presents a challenge for high-frequency or long-horizon forecasting. 
To ensure scalability during the training phase, we currently employ a sampling-based approximation to estimate MMD scores. 
While this pragmatic trade-off substantially reduces per-iteration latency, it may marginally decrease sensitivity to subtle distribution shifts compared to an exact calculation. 

\textbf{Sensitivity to Hyperparameter Configuration.}
The efficacy of our dynamic framework depends on the coordinated tuning of a broad set of hyperparameters across the perception, decision and adaptation modules.
Beyond the drift detector's settings, the model's behavior is also governed by expert management constraints, such as the maximum expert capacity, pruning frequency and low-usage removal thresholds.
Currently, these parameters are selected empirically based on validation performance.
The interplay between these hyperparameters creates a complex search space, where suboptimal configurations can lead to instability.
For instance, an overly aggressive pruning threshold might prematurely discard useful experts, while a loose capacity limit could dilute model specialization.
The lack of an automated, adaptive hyperparameter optimization mechanism remains a barrier to effortless deployment on new datasets.

\textbf{Cold-Start for New Experts.}
Dynamic TMoE introduces a transient cold-start phase for initialization of new experts.
To ensure that newly added experts possess relevant prior knowledge, they are initialized from a base template rather than random weights. 
This design represents a principled trade-off between immediate responsiveness and long-term specialization. 
Currently, the expert will have a brief warm-up period for adapting to the new distribution.
During this phase, the model may have a temporary variance in prediction error until the new parameters stabilize.

\subsection{Future Work}
\textbf{Online Learning and Test-Time Adaptation.}
While Dynamic TMoE ensures inference stability and computational efficiency by maintaining a fixed architecture post-deployment, this design represents a deliberate trade-off between robustness and real-time plasticity. 
Currently, architectural evolution is restricted to the training phase to avoid the risks of catastrophic forgetting or instability inherent in streaming scenarios. 
A promising future direction is to extend our framework into an online learning setting, allowing the Evolvable Expert Manager to continuously instantiate or update experts during inference. 
This would enable the model to handle indefinite horizons and extreme distribution shifts without the need for periodic offline re-training.

\textbf{Parameter-Efficient Adaptation.}
To address the computational overhead of adding full experts, future work will explore Parameter-Efficient Fine-Tuning (PEFT) techniques, such as Low-Rank Adaptation (LoRA) or Adapters.
Instead of instantiating distinct neural networks for each new regime, we could maintain a single frozen backbone and dynamically inject lightweight, trainable rank-decomposition matrices.
This would significantly reduce the memory footprint while maintaining the plasticity required for drift adaptation.

\textbf{Integration with Time Series Foundation Models.}
While Dynamic TMoE is presented here as a specialized architecture, its core mechanisms, including structural plasticity and state-aware routing, are fundamentally modular. 
Rather than treating it as a standalone model, we view it as a scalable strategy for enhancing Time Series Foundation Models. 
Current foundation models often struggle with zero-shot generalization when encountering out-of-distribution shifts after pre-training. 
By integrating our evolvable expert pool and temporal memory routing, these large-scale models could theoretically achieve dynamic adaptation to downstream tasks. 
This integration would provide a mechanism to accommodate novel data regimes via lightweight expert instantiation, preserving the foundational knowledge while extending the applicability of models to unseen distributions without the overhead of full-model retraining.

\begin{figure*}[p]
  \centering  \includegraphics[width=\textwidth]{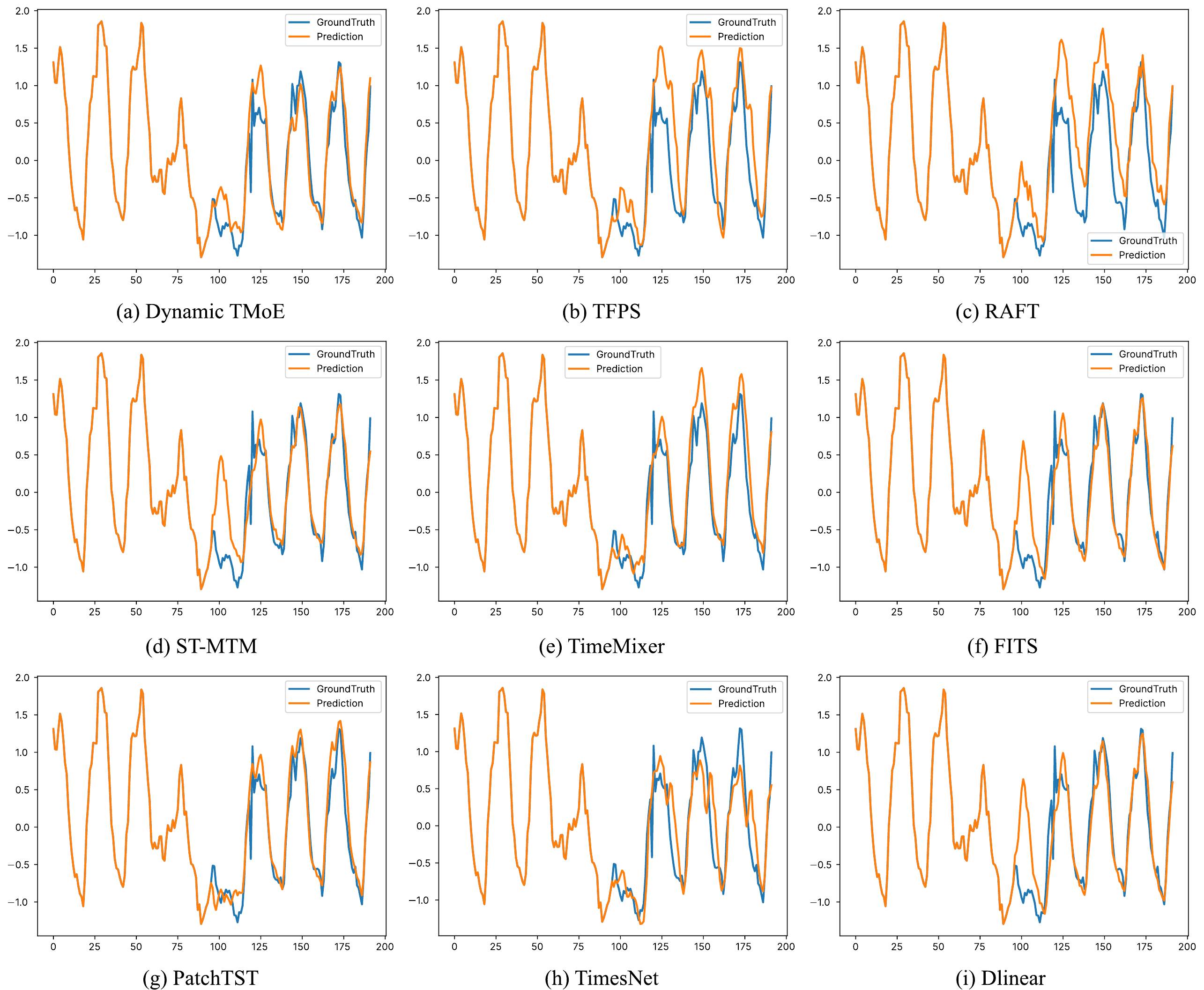}
  \caption{
    Visualization of forecasting results on the Electricity dataset with an input-96-predict-96 setting.
    The \textcolor{blue}{blue} lines represent the ground truth, while the \textcolor{orange}{orange} lines indicate the model predictions.
  }
  \label{fig:showcase_ecl}
\end{figure*}
\begin{figure*}[p]
  \centering  \includegraphics[width=\textwidth]{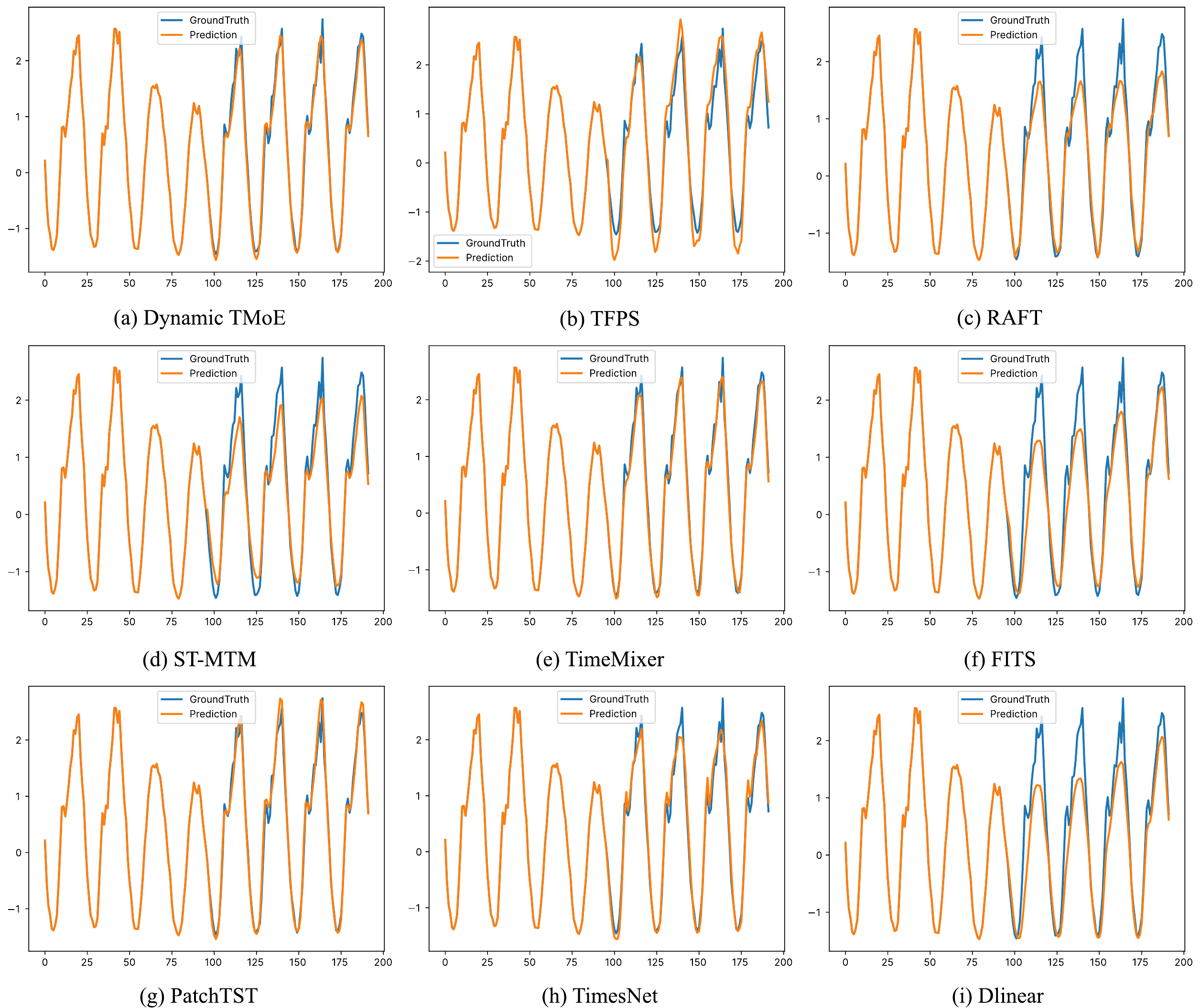}
  \caption{
    Visualization of forecasting results on the Traffic dataset with an input-96-predict-96 setting.
    The \textcolor{blue}{blue} lines represent the ground truth, while the \textcolor{orange}{orange} lines indicate the model predictions.
  }
  \label{fig:showcase_traffic}
\end{figure*}
\begin{figure*}[p]
  \centering  \includegraphics[width=\textwidth]{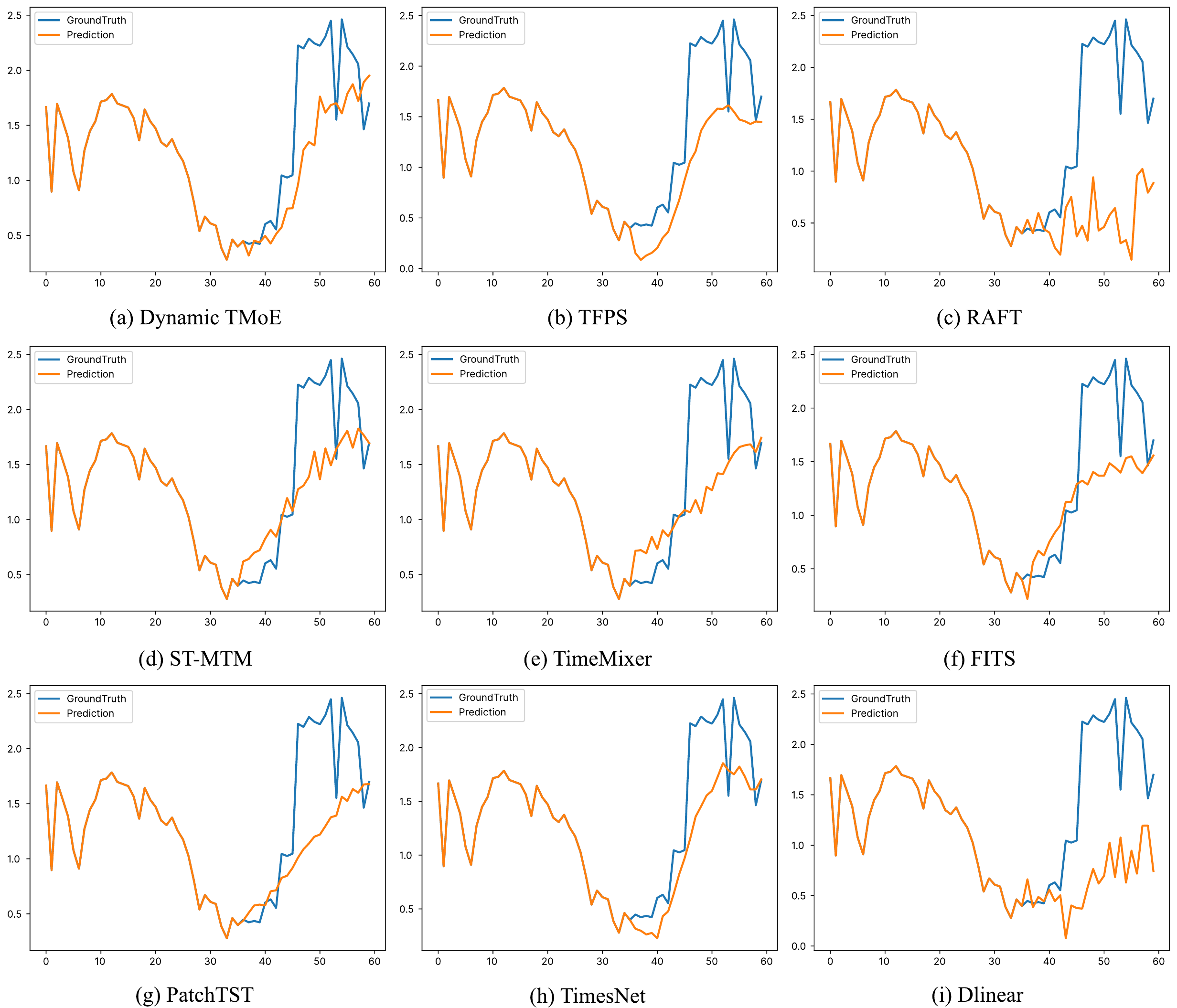}
  \caption{
    Visualization of forecasting results on the ILI dataset with an input-36-predict-24 setting.
    The \textcolor{blue}{blue} lines represent the ground truth, while the \textcolor{orange}{orange} lines indicate the model predictions.
  }
  \label{fig:showcase_ili}
\end{figure*}

%%%%%%%%%%%%%%%%%%%%%%%%%%%%%%%%%%%%%%%%%%%%%%%%%%%%%%%%%%%%%%%%%%%%%%%%%%%%%%%
%%%%%%%%%%%%%%%%%%%%%%%%%%%%%%%%%%%%%%%%%%%%%%%%%%%%%%%%%%%%%%%%%%%%%%%%%%%%%%%

\end{document}